\documentclass{article}

\usepackage{microtype}
\usepackage{graphicx}
\usepackage{subcaption}
\usepackage{booktabs} 

\usepackage{hyperref}

\usepackage[preprint]{icml2026}

\usepackage{amsmath}
\usepackage{mathtools}
\usepackage{amsthm}
\usepackage{times}

\usepackage[capitalize,noabbrev]{cleveref}

\theoremstyle{plain}
\newtheorem{theorem}{Theorem}[section]
\newtheorem{proposition}[theorem]{Proposition}
\newtheorem{lemma}[theorem]{Lemma}

\theoremstyle{definition}

\theoremstyle{remark}

\usepackage[textsize=tiny]{todonotes}

\newcommand{\lammax}{\lambda_{\max}}

\icmltitlerunning{Weight-norm Criticality: A Mechanism for Loss Spikes Induced by the Normalization and Weight Decay}

\begin{document}

\twocolumn[
  \icmltitle{Weight-norm Criticality: A Mechanism for Loss Spikes Induced by the Normalization and Weight Decay}



  \icmlsetsymbol{equal}{*}

  \begin{icmlauthorlist}
    \icmlauthor{Xiaolong Li}{1} \icmlauthor{Zhangchen Zhou}{1}\icmlauthor{Zhi-Qin John Xu}{1}
  \end{icmlauthorlist}

  \icmlaffiliation{1}{Institute of Natural Sciences, School of Mathematical Sciences, Shanghai Jiao Tong University}

  \icmlcorrespondingauthor{Zhangchen Zhou}{zczhou1115@sjtu.edu.cn}
  \icmlcorrespondingauthor{Zhi-Qin John Xu}{xuzhiqin@sjtu.edu.cn}

  \icmlkeywords{Neural Networks, Weight Decay, Loss spikes, Normalization, Scale-invariant, Training Instability, Edge of Stability}

  \vskip 0.3in
] 

\printAffiliationsAndNotice{}  

\begin{abstract}
Most explanations of training instability focus on \emph{learning-rate criticality}, typically characterized by the Edge of Stability, beyond which optimization becomes unstable. We argue that, in practical deep neural network training, there is an additional and often overlooked \emph{weight-norm criticality}. This criticality is induced by the interaction between normalization (which introduces scale-invariant components) and weight decay (which persistently shrinks parameter norms). As the weight decay coefficient increases, the norms of scale-invariant weights are progressively driven toward zero. Meanwhile, the sharpness of the loss landscape increases rapidly, destabilizing the optimization dynamics and resulting in abrupt loss spikes. This perspective provides a rationale for why weight penalties can improve generalization yet cannot be made arbitrarily strong: excessive decay drives scale-invariant weight norms past a critical boundary and destabilizes training. Our work provides a new mechanistic understanding of loss spikes through the lens of \emph{weight-norm criticality}. Moreover, \emph{weight-norm criticality} yields testable predictions that we validate empirically in networks with scale-invariant components, providing empirical support for the proposed mechanism.

\end{abstract}

\section{Introduction}


\begin{figure}[H]
  \centering
  \begin{subfigure}[t]{0.235\textwidth}
    \centering
    \includegraphics[width=\linewidth]{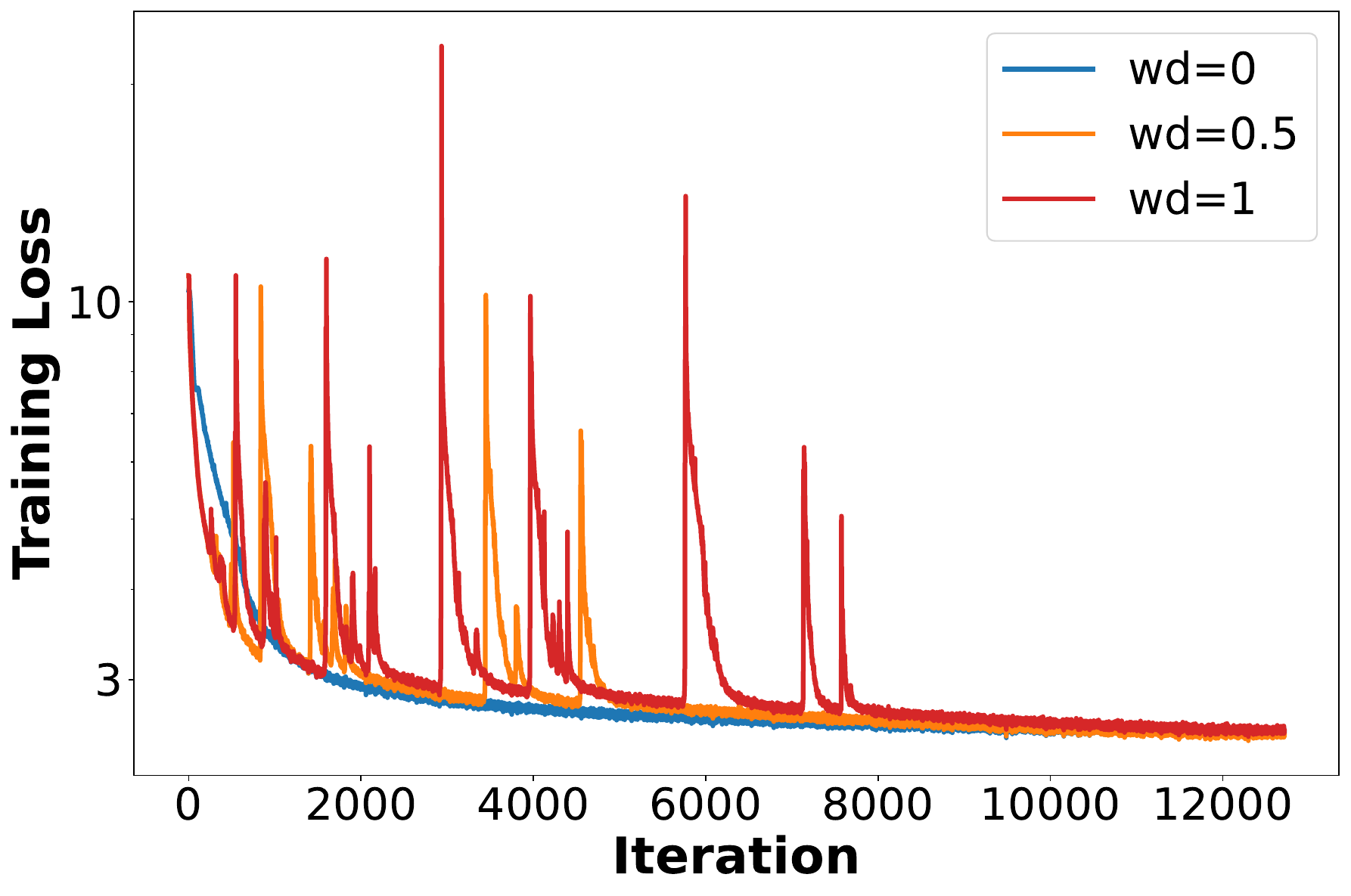}
    \caption{Transformer}
    \label{fig:transformer_100b}
  \end{subfigure}
  \hfill
  \begin{subfigure}[t]{0.235\textwidth}
    \centering
    \includegraphics[width=\linewidth]{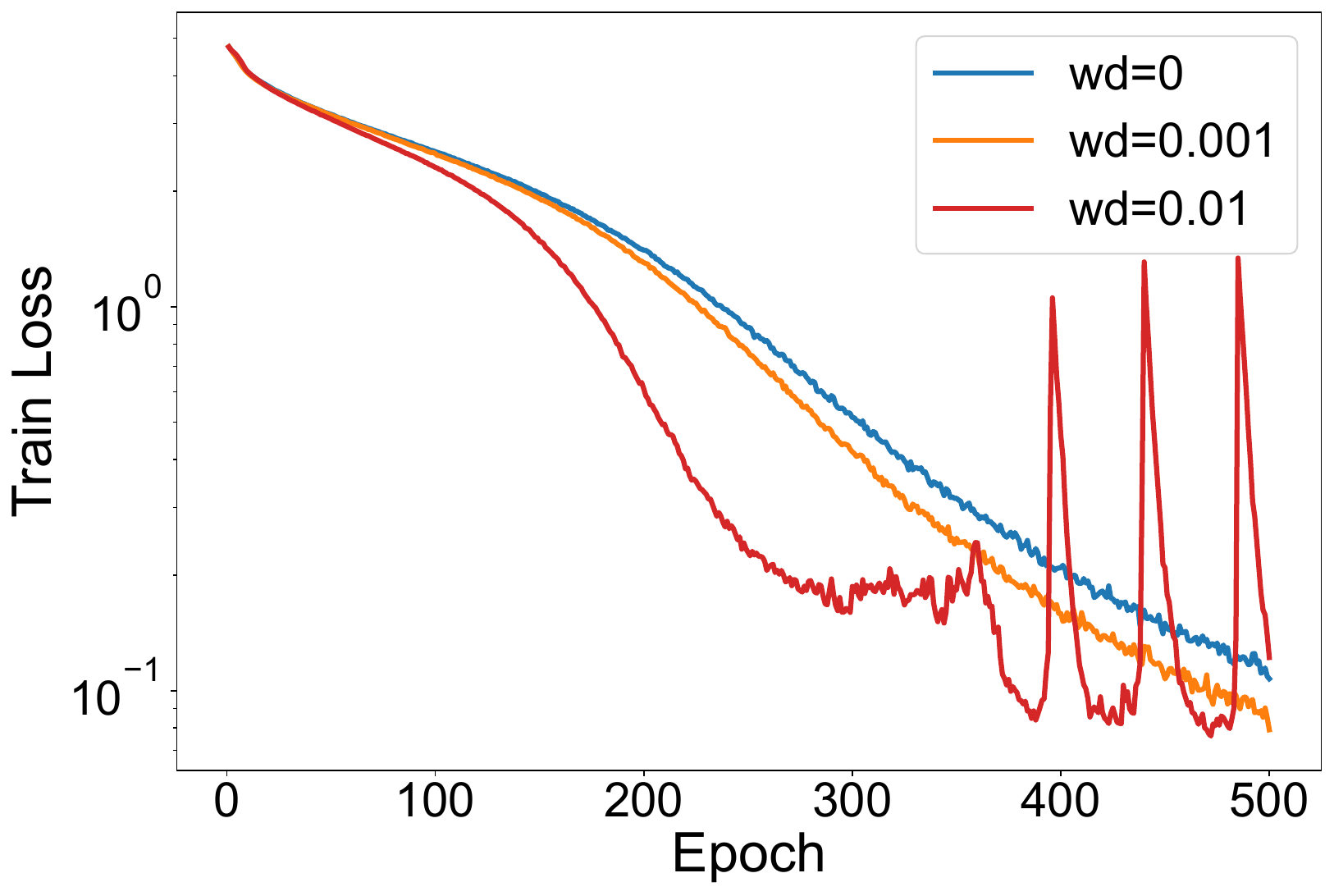}
    \caption{ResNet-50}
    \label{fig:ResNet}
  \end{subfigure}
\caption{\textbf{The impact of weight decay $\lambda$ on training loss.} Larger weight decay leads to more frequent loss spikes. (a) Training loss trajectories of a 16-layer, 16-head 187M Transformer on a 100B-token corpus. (b) ResNet-50 trained on the CIFAR-100 dataset.}
  \label{fig:spike-real-data}
\end{figure}

In training deep neural networks, the loss spike phenomenon---referring to a sudden increase in the training loss at certain steps---is often encountered. The mechanisms underlying a loss spike can be complex and may involve multiple factors, including: (i) \emph{data}, where a heterogeneous mini-batch yields a noisy or erratic gradient direction~\citep{chowdhery2023palm}; (ii) \emph{landscape}, where a lower-loss-as-sharper structure makes it easier for the learning rate to cross the stability threshold~\citep{li2023loss}; and (iii) \emph{optimization}, 
where Adam over-amplifies the adaptive learning rate because the decrease in second-order momentum $v_t$ trails behind the reduction of the gradient $g_t$~\citep{bai2025adaptive}. 
Much of the existing literature interprets loss spikes through the lens of the Edge of Stability (EoS)~\citep{cohen2021gradient}, emphasizing \emph{learning-rate criticality} as the primary driver. In this view, spikes occur when the effective step size surpasses a stability boundary.
In this work, we argue that practical training exhibits a frequently overlooked \emph{weight-norm criticality} within specific architectures. This phenomenon provides a complementary perspective to \emph{learning-rate criticality}, collectively explaining the emergence of loss spikes.
Concretely, we study the interaction between normalization (e.g., BatchNorm (BN)~\citep{ioffe2015batch} and LayerNorm (LN)~\citep{ba2016layer}) and weight decay, and show that it can readily induce loss spikes, as demonstrated by both a Transformer network~\citep{vaswani2017attention} and a ResNet-50~\citep{he2016deep} in Fig.~\ref{fig:spike-real-data}. The key reason is that normalization introduces scale-invariant components, so that the loss can be insensitive to rescaling certain weights, while the local curvature is not.

We begin by studying where the learned solution lies in the loss landscape. We find that, as the weight decay coefficient becomes stronger, networks equipped with normalization layers converge to increasingly sharp regions of the landscape. In particular, for parameters associated with normalization (e.g., weights feeding into normalization layers), their norms tend to shrink toward zero as weight decay increases. We theoretically show that, as these weight norms approach zero, the relevant Hessian eigenvalues can be significantly amplified, leading to a rapid escalation of local sharpness. By incorporating this normalization-induced scaling effect into the landscape analysis, we derive a stability indicator of \emph{weight-norm criticality} that combines the hessian and the learning rate to accurately predict whether a loss spike will occur.

Moreover, the critical boundary admits a decomposition across individual scale-invariant components. In practice, computing the \emph{weight-norm criticality} for a given scale-invariant component only requires the hessian restricted to the parameters of that component, rather than the hessian over all trainable network parameters. This yields a separate critical threshold for each scale-invariant layer, enabling our analysis to extend naturally to realistic architectures that contain multiple normalization or other scale-invariant modules. Beyond improving applicability, the layer-wise thresholds also provide monitoring value: when loss spikes arise from weight-norm collapse, we can attribute the instability to specific layers or components. This perspective also provides a rationale for a practical tension: while weight penalties can improve generalization, they cannot be made arbitrarily strong, because excessive decay can push the weight norms of the scale-invariant components past the critical boundary and destabilize training.

This work advances the mechanistic understanding in three key respects. First, we clarify that weight decay can trigger loss spikes even in networks that are not globally scale-invariant, as long as they contain scale-invariant components. Second, we highlight the link between loss spikes and the shrinking weight norm of scale-invariant components from the perspective of \emph{weight-norm criticality}. Third, for practical models containing scale-invariant components, our framework of \emph{weight-norm criticality} enables us to analyze and localize loss spikes induced by the combined effect of scale invariance and weight decay, allowing the instability to be attributed to specific layers or components rather than the network as a whole.

\section{Related works}

\paragraph{Loss spikes analysis}Several studies have investigated the mechanics of loss spikes and their relationship with the loss landscape, optimizers, and data distribution. \citet{li2023loss} argued that certain spikes originate in sharp regions characterized by a ``lower-loss-as-sharper'' structure within the loss landscape. Regarding optimizers, \citet{ma2022qualitative} provided a qualitative analysis linking Adam's hyperparameters to the onset of spikes and oscillations. \citet{molybog2023theory} observed that gradient and second-moment estimates for shallow layer parameters tend to decay toward zero, only to spike abruptly upon encountering large gradients. Recently, \citet{bai2025adaptive} demonstrated that the adaptivity of Adam can trigger spikes, showing that reducing $\beta_2$ effectively mitigates spike frequency. From a data perspective, \citet{chowdhery2023palm} noted that resuming from a prior checkpoint while discarding the problematic batch can resolve training instabilities in large-scale models.
\paragraph{Edge of Stability} 
Extensive studies~\citep{wu2018sgd, xing2018walk, Jastrzebski2018on, Jastrzebski2020The, lewkowycz2020large, cohen2021gradient, ahn2022understanding, NEURIPS2022_dffd1c52, arora2022understanding} have documented that, during neural network training, loss landscape sharpness~($\lambda_{\max}$) hovers near the stability threshold---a phenomenon \citet{cohen2021gradient} termed the ``Edge of Stability'' (EoS). \citet{NEURIPS2022_dffd1c52} extended this analysis to scale-invariant models, revealing that the effective learning rate exhibits EoS behavior with respect to spherical sharpness, but this relationship does not hold for common sharpness. Furthermore, \citet{damian2023selfstabilization} and \citet{wang2022analyzing} demonstrated that in this regime, a condition of $\lambda_{\max} > 2/\eta$ triggers self-stabilization mechanisms to suppress sharpness and maintain stability.




\begin{figure*}[t] 
  \centering
  \begin{subfigure}[t]{0.33\textwidth}
    \centering
    \includegraphics[width=0.9\linewidth]{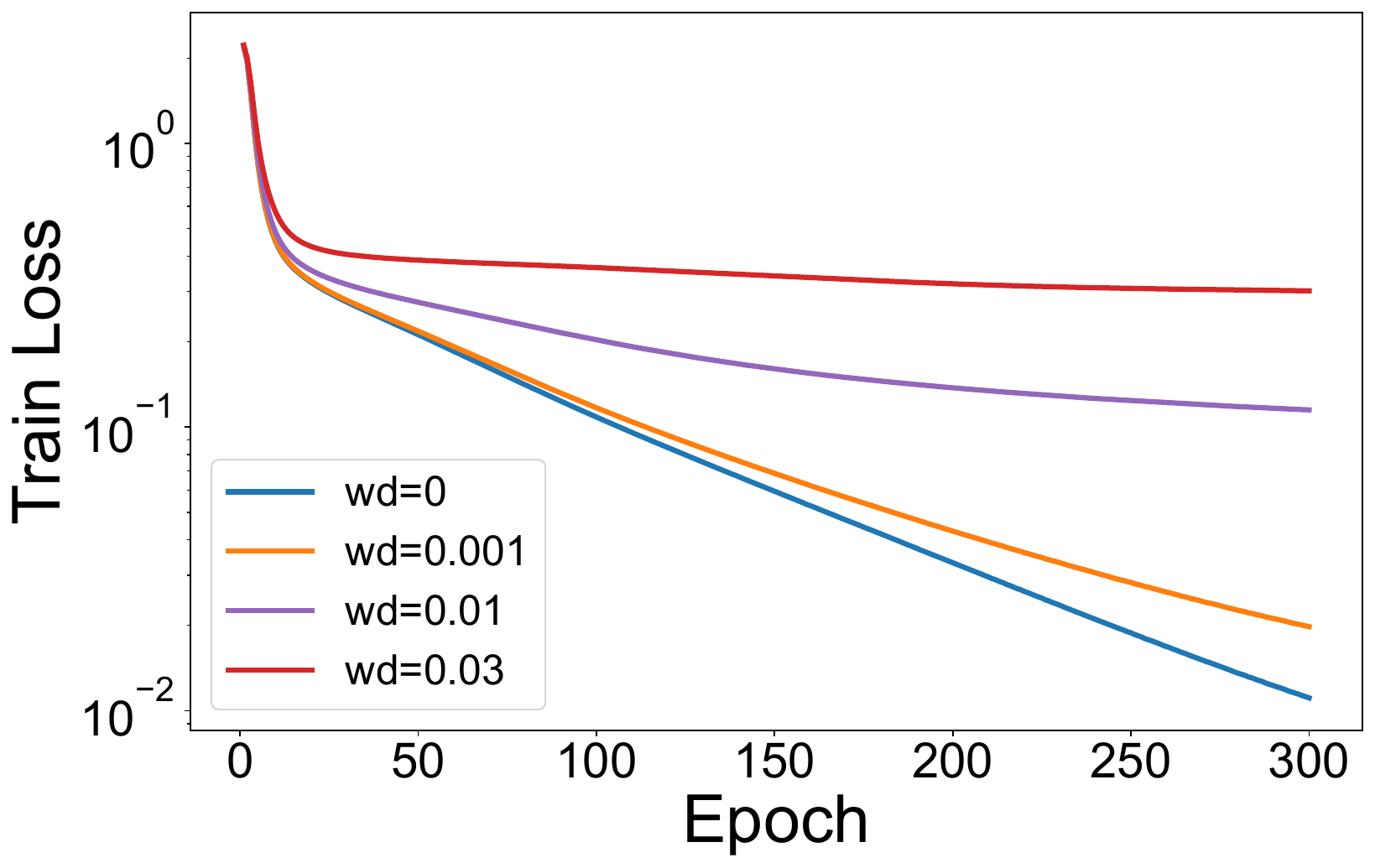}
    \caption{\textbf{No normalization}}
    \label{fig:none_003}
  \end{subfigure}
  \hfill
  \begin{subfigure}[t]{0.32\textwidth}
    \centering
    \includegraphics[width=0.9\linewidth]{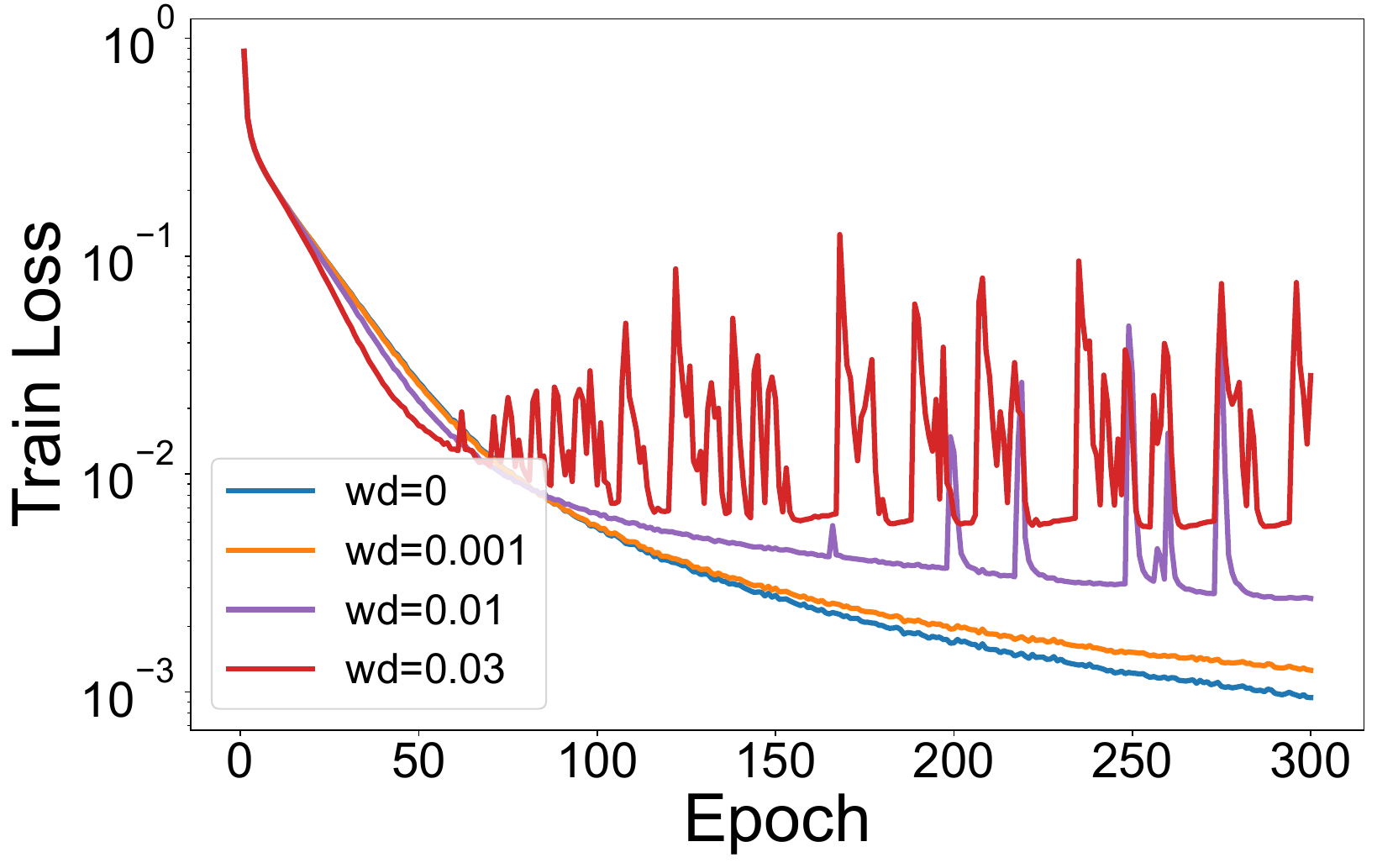}
    \caption{\textbf{BatchNorm (BN)}}
    \label{fig:bn_003}
  \end{subfigure}
  \hfill
  \begin{subfigure}[t]{0.33\textwidth}
    \centering
    \includegraphics[width=0.9\linewidth]{pic/2.2.1new.pdf}
    \caption{\textbf{LayerNorm (LN)}}
    \label{fig:ln_003}
  \end{subfigure}

  \caption{\textbf{Mechanistic probe on MNIST with optional normalization.}
Training loss trajectories of fully connected neural networks trained on MNIST with SGD under a sweep of the weight decay coefficient.
(a) \textbf{No normalization.} \(\texttt{Linear}\rightarrow\tanh\) hidden blocks.
(b) \textbf{BatchNorm (BN).} \(\texttt{Linear}\rightarrow\texttt{BN}\rightarrow\tanh\) hidden blocks.
(c) \textbf{LayerNorm (LN).} \(\texttt{Linear}\rightarrow\texttt{LN}\rightarrow\tanh\) hidden blocks.}
  \label{fig:combined_four_plots}
\end{figure*}

\paragraph{Interplay of weight decay and normalization} Prior research emphasizes the interplay between normalization layers and weight decay in scale-invariant settings. \citet{van2017l2,NEURIPS2018_a0160709, zhang2018three} demonstrate that this interaction influences the effective learning rate via the weight scale. Building on this, \citet{NEURIPS2020_a7453a5f} further characterizes the dynamic evolution of the effective learning rate throughout the training process.

The combination of weight decay and normalization is often observed to drive the training toward a stable equilibrium state \citep{NEURIPS2019_cb3ce9b0, NEURIPS2020_a7453a5f, NEURIPS2021_326a8c05}. 
However, this interaction can also introduce significant instability. \citet{li2019exponential} argues that globally scale-invariant functions are ill-conditioned under canonical optimization, as the smoothness of the function decreases rapidly as weights approach the origin. This instability has been empirically verified in full-batch GD settings by \citet{NEURIPS2020_a7453a5f}. \citet{Li_Chen_Yang_2020} reveals that improper weight decay can cause the weights of certain layers to vanish quickly, driving the effective learning rate to infinity and resulting in training failure. Additionally, \citet{NEURIPS2021_b433da1b} reports that this combination can induce periodic behaviors during training. \citet{NEURIPS2022_5aea56ee} classifies the training dynamics on the sphere into three distinct regimes---convergence, chaotic equilibrium, and divergence---depending on the effective learning rate. While prior analyses have primarily focused on globally scale-invariant models, we extend this scope to architectures that combine scale-invariant and non-scale-invariant parameters, which more accurately reflect the structure of practical deep neural networks used in modern applications.

\section{Empirical Evidence of Loss Spikes in Neural Networks with Normalization Layers}

\label{Empirical Evidence of Loss Spikes}

In this section, we provide experimental evidence that increasing weight decay induces pronounced loss spikes in models equipped with normalization layers. Through carefully controlled experiments, we further demonstrate that these loss spikes do not arise from normalization or weight decay in isolation, but from the interaction between normalization and weight decay.


\paragraph{Large-scale language model pretraining.}
We pretrain a LLaMA-style~\citep{touvron2023llama} Transformer for one epoch on a 100B-token corpus.
To isolate the effect of regularization, we sweep the weight decay coefficient over $\{0,\,0.5,\,1\}$ while keeping all other training settings fixed.
The resulting training-loss curves are shown in Fig.~\ref{fig:transformer_100b}.
Complete model architecture and optimization hyperparameters are provided in Appendix~\ref{app:llm_pretraining_details}.

\paragraph{ResNet-50 image classification on CIFAR-100.}
To verify that the phenomenon is not specific to autoregressive Transformers or language data, we next train a standard ResNet-50 on CIFAR-100~\citep{Krizhevsky09learningmultiple} using stochastic gradient descent (SGD).
We again vary only the weight decay coefficient while holding all other hyperparameters fixed, and report the training loss in Fig.~\ref{fig:ResNet}.
Full training details are deferred to Appendix~\ref{app:resnet_cifar_details}.


\paragraph{Mechanistic probe on MNIST: fully connected image classification with optional normalization.}
The above observations raise a natural question: are weight-decay-induced loss spikes an intrinsic feature of optimization under strong regularization, or do they arise from the scale invariance introduced by normalization layers?
To probe this mechanism in a simple and reproducible setting, we study MNIST classification~\citep{lecun2010mnist} with a fully connected nerual network (FNN) trained with SGD, and explicitly control whether normalization is present in the architecture.
Specifically, hidden blocks take the following form:

$$
\texttt{Linear}\rightarrow\texttt{Norm}\rightarrow\texttt{Tanh},
$$

where $\texttt{Norm}\in\{\texttt{BN}, \texttt{LN}, \texttt{Identity}\}$ is applied only in hidden layers (the output layer is kept unnormalized).
Across these variants, we sweep the weight decay coefficient while keeping all other hyperparameters fixed within each setting, and report the resulting training loss in Fig.~\ref{fig:combined_four_plots}.
Full architectural and optimization details are deferred to Appendix~\ref{app:mnist_probe_details}.

\paragraph{Fully controlled regression on synthetic data.}
To reproduce loss spikes in a maximally simplified and controlled setting, we consider a minimal regression task on synthetic data.
Specifically, we train a three-layer FNN in which normalization is applied to the first two layers, and study how weight decay affects the training dynamics under fixed optimization conditions.

The model maps a two-dimensional input $x = [x_1, x_2] \in \mathbb{R}^2$ to a scalar output $y = x_1 + 2x_2$.
Despite the simplicity of this setup, we observe that increasing the weight decay coefficient can induce pronounced loss spikes during training, as shown in Fig.~\ref{fig:loss_2x1y}.
This minimal construction allows us to reproduce the instability in isolation, without confounding factors present in larger architectures.

A complete specification of the dataset, model architecture, and optimization protocol is provided in Appendix~\ref{app:minimal_controlled_experiment}.
In the following section, we leverage this controlled setting to analyze the underlying mechanism from the perspective of the loss landscape.

\section{Learning Trajectory: Convergence toward Singularities on the Loss Landscape}

To visualize and compare optimization dynamics under varying weight decay coefficients $\lambda$, we construct a two-dimensional subspace by applying PCA to the model parameters. Specifically, at each epoch, all trainable parameters are flattened into a single vector $\boldsymbol{\theta}_i$. These vectors, collected across all epochs and $\lambda$ settings, are concatenated column-wise to form the matrix $\Theta = [\boldsymbol{\theta}_1, \boldsymbol{\theta}_2, \cdots, \boldsymbol{\theta}_n]$. We then perform PCA on $\Theta$, using the first two principal components to define a shared projection plane. The loss contours in the PCA plane are computed by sampling a 2D grid in the coordinate system of the top-two principal components. Each grid point is mapped back to the full parameter space via the inverse PCA transform, after which the reconstructed parameters are loaded into the network to evaluate the training loss. The resulting scalar loss values over the grid are plotted as level sets, producing the contour map in the figure. To ensure a fair comparison, all models are initialized identically, guaranteeing that their trajectories originate from the same point in the projected space. Fig.~\ref{fig:mnist_sifc_landscape_traj} and Fig.~\ref{fig:2x1y_landscape_traj}  illustrate (i) the training loss landscape within this PCA plane and (ii) the learning trajectories obtained by projecting $\boldsymbol{\theta}_i$ onto the top two principal directions.


\begin{figure}[h]
  \begin{center}
\centerline{\includegraphics[width=0.8\columnwidth]{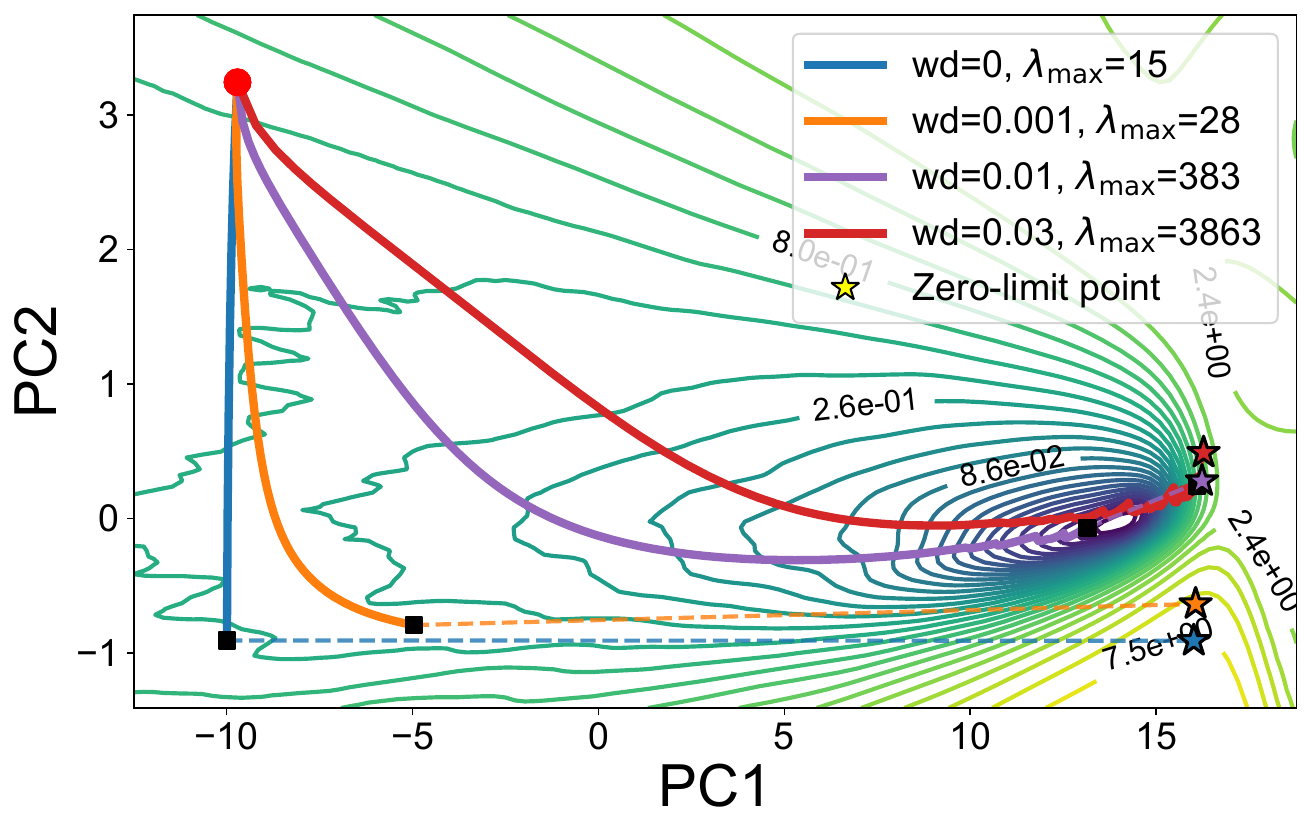}}

    \caption{\textbf{PCA visualization of training trajectories} for a five-layer FNN with BN on MNIST under varying weight decay. The learning rate is fixed to be $\eta=0.003$. The black squares mark the final parameter states and $\lambda_{\max}$ denotes the largest eigenvalue of the Hessian matrix at the training endpoint. The colored stars represent the derived parameters obtained by freezing the output layer of the trained models and pushing the scale-invariant parameters to zero. Contour lines represent training loss. The explained variance ratios of PC1 and PC2 are $0.9927$ and $0.0059$, respectively.}
  \label{fig:mnist_sifc_landscape_traj}
  \end{center}
\end{figure}


Fig.~\ref{fig:mnist_sifc_landscape_traj} and Fig.~\ref{fig:2x1y_landscape_traj}  highlight two distinct phenomena. First, increasing weight decay leads to a larger maximum eigenvalue at convergence, suggesting that stronger regularization biases the optimization toward sharper solutions. Second, the converged solutions under different $\lambda$ values do not scatter arbitrarily but exhibit a systematic shift along a specific principal direction. To characterize this, we define a zero-limit reference point ($\star$) for each run by setting the parameters of scale-invariant layers to zero while keeping others fixed. Notably, the solution shift induced by $\lambda$ aligns well with the direction toward this reference point. This indicates that weight decay does not uniformly contract the parameter vector; instead, it predominantly suppresses the scale-invariant layers, resulting in a structured displacement in parameter space.




\begin{figure}[h]
  \centering
  \begin{subfigure}[h]{0.49\columnwidth}
    \centering
    \includegraphics[width=\linewidth]{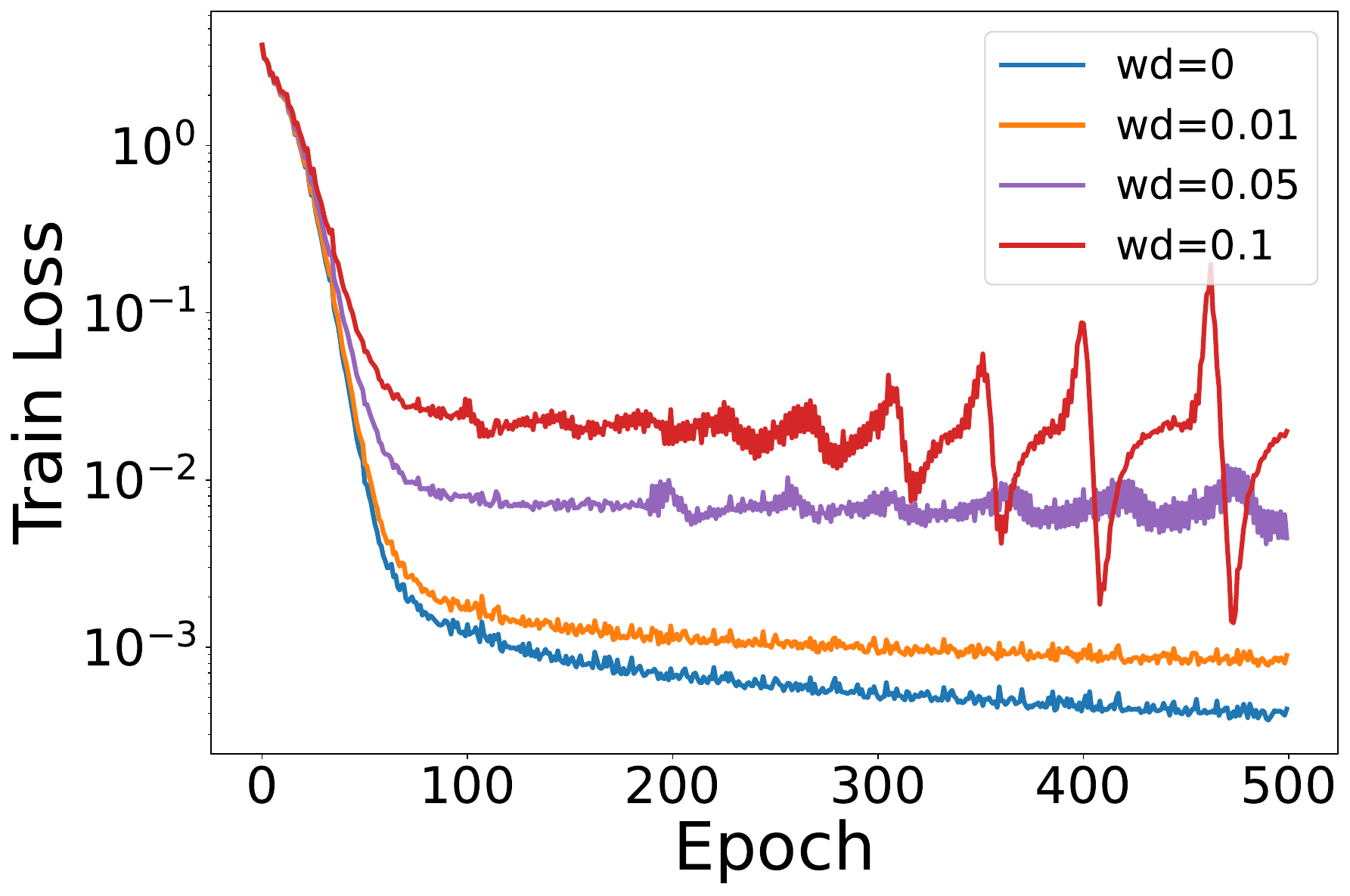}
    \caption{Training loss}
    \label{fig:loss_2x1y}
  \end{subfigure}
  \hfill
  \begin{subfigure}[h]{0.49\columnwidth}
    \centering
    \includegraphics[width=\linewidth]{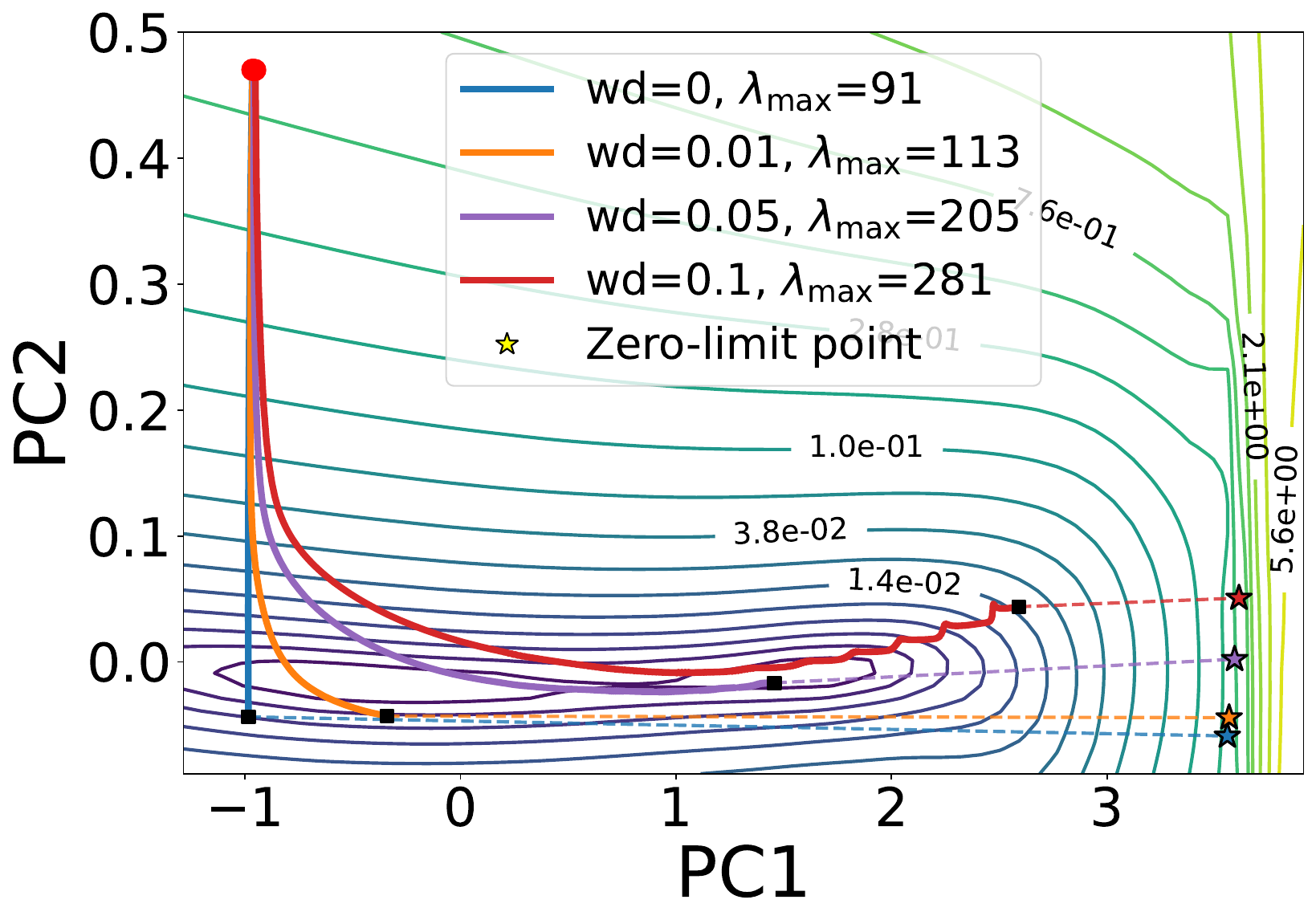}
    \caption{Trajectories}
    \label{fig:2x1y_landscape_traj}
  \end{subfigure}

  \caption{Three-layer FNN with BN on $y=x_1+2x_2$ function. The learning rate is fixed to be $\eta=0.03$.
  (a) \textbf{Training loss trajectories} under a sweep of the weight decay coefficient.
  (b) \textbf{PCA visualization of training trajectories} under varying weight decay. The black squares mark the final parameter states and $\lambda_{\max}$ denotes the largest eigenvalue of the Hessian matrix at the training endpoint. The colored stars represent the derived parameters obtained by freezing the output layer of the trained models and pushing the scale-invariant parameters to zero. Contour lines represent training loss. The explained variance ratios of PC1 and PC2 are $0.9958$ and $0.0020$, respectively.}
  \label{fig:lr003_combined}
\end{figure}

An additional observation is that these zero-limit reference points consistently lie in regions of extremely large curvature on the loss landscape. This motivates us to examine whether the two phenomena described above are intrinsically connected. In particular, as the weight decay coefficient increases, the converged solution is progressively displaced toward the low-dimensional manifold associated with vanishing scale-invariant parameters, where the local curvature is significantly larger. As the distance to this zero-limit manifold decreases, the solution becomes increasingly sharp, as reflected by the rapid growth of $\lambda_{\max}$. We conjecture that this directional drift toward high-curvature regions can eventually destabilize the optimization dynamics, giving rise to the observed loss spikes. In the following sections, we provide a theoretical analysis to substantiate this interpretation.

\section{Theoretical Analysis: Scale Invariance and Curvature Dynamics}

In this section, we analyze the homogeneity properties of the Hessian induced by scale invariance and study how parameter scaling affects curvature. Based on this analysis, we derive a stability boundary expressed directly in terms of the weight norm of scale-invariant components, which reveals a weight-norm criticality in this setting and provides a mechanistic explanation for the emergence of loss spikes observed in the experiments. Finally, we validate this theoretical explanation through controlled experiments on practical learning tasks with real data, demonstrating that violations of the derived stability boundary coincide with the onset of loss spikes in practice.


\subsection{Homogeneity of the Hessian Matrix}

Consider a loss function $L(u, v)$ where parameters are partitioned into a scale-invariant component $u \in \mathbb{R}^{d_u}$ (e.g., weights preceding normalization layers) and other parameters $v \in \mathbb{R}^{d_v}$. The positive scale invariance property implies that for any scalar $\alpha > 0$:
\begin{equation}
    L(\alpha u, v) = L(u, v).
\end{equation}
We analyze the structural behavior of the Hessian matrix $H(u,v) := \nabla^2 L(u,v)$ under scaling transformations.

\begin{theorem}[Curvature Explosion Induced by Scale Invariance]
\label{thm:curvature_explosion}
Let $L(u,v)$ be twice continuously differentiable and positively scale-invariant in $u$:
\[
L(\alpha u,v)=L(u,v),\qquad \forall\,\alpha>0.
\]
Let $H(u,v)=\nabla^2 L(u,v)$ be the Hessian, written in blocks
\[
H(u,v)=
\begin{bmatrix}
H_{uu}(u,v) & H_{uv}(u,v)\\
H_{vu}(u,v) & H_{vv}(u,v)
\end{bmatrix},
\]
then for any $\alpha>0$,
\begin{equation}
\label{eq:curvature_lb_split}
\lammax\!\big(H(\alpha u,v)\big)
\ \ge\ 
\alpha^{-2}\,\lammax\!\big(H_{uu}(u,v)\big).
\end{equation}
\end{theorem}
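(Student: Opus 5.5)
The plan is to prove the homogeneity identity $H_{uu}(\alpha u,v)=\alpha^{-2}H_{uu}(u,v)$ and then pass from the $uu$-block to the full Hessian through a Rayleigh-quotient (Cauchy interlacing) argument.

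\textbf{Step 1 (scaling of the $uu$-block).} Fix $\alpha>0$ and differentiate the identity $L(\alpha u,v)=L(u,v)$ with respect to $u$. The chain rule gives
\[
\alpha\,\nabla_u L(\alpha u,v)=\nabla_u L(u,v),
\]
so $\nabla_u L$ is positively homogeneous of degree $-1$ in $u$. Differentiating once more in $u$ gives
\[
\alpha^2 H_{uu}(\alpha u,v)=H_{uu}(u,v).
\]
These identities hold on the open set where $L$ is $C^2$, and they are valid because the scaling identity holds for all $u$ in a neighborhood, not just at a single point.

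For completeness one can also record the other blocks. Differentiating $\alpha\nabla_u L(\alpha u,v)=\nabla_u L(u,v)$ in $v$ gives $H_{uv}(\alpha u,v)=\alpha^{-1}H_{uv}(u,v)$. Differentiating the original identity twice in $v$ gives $H_{vv}(\alpha u,v)=H_{vv}(u,v)$. Only the $uu$-block is needed, however.

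\textbf{Step 2 (from the block to the full Hessian).} The full Hessian $H(\alpha u,v)$ is symmetric by the $C^2$ assumption. Its largest eigenvalue is therefore the maximum of the Rayleigh quotient over all unit vectors.

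Restrict this maximum to test vectors of the form $w=(x,0)$, with $x\in\mathbb{R}^{d_u}$ and $\|x\|=1$. For such vectors $w^\top H(\alpha u,v)w = x^\top H_{uu}(\alpha u,v)x$. Hence
\[
\lammax\big(H(\alpha u,v)\big)\ \ge\ \lammax\big(H_{uu}(\alpha u,v)\big).
\]

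\textbf{Step 3 (combine).} By Step 1, $\lammax\big(H_{uu}(\alpha u,v)\big)=\alpha^{-2}\lammax\big(H_{uu}(u,v)\big)$, since $\alpha^{-2}>0$ preserves the ordering of eigenvalues. Substituting this into Step 2 gives \eqref{eq:curvature_lb_split}.

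\textbf{Expected obstacle.} The argument is essentially routine. The only subtle point is justifying differentiation of the scaling identity in $u$. This needs $L$ to be $C^2$ on an open set containing both $u$ and $\alpha u$, which is implicit in the assumption that $u\neq 0$, since scale-invariant losses are generally singular at $u=0$.

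I would also note a caveat: the bound is informative only when $\lammax(H_{uu}(u,v))>0$. Otherwise the right-hand side is nonpositive and the inequality is trivial. Still, it holds in all cases, because Step 2 does not require positivity.
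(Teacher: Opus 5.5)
Your proof is correct and follows the paper's route. The paper also obtains $H_{uu}(\alpha u,v)=\alpha^{-2}H_{uu}(u,v)$ by differentiating the invariance identity twice (its Hessian-homogeneity lemma), then bounds $\lambda_{\max}$ of the full Hessian below by that of the principal $uu$-block; it cites Cauchy interlacing for this step, where you use the equivalent Rayleigh-quotient argument with test vectors $(x,0)$.
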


\begin{proof}
The full proof is given in Appendix~\ref{app:curvature_explosion_proof}.
\end{proof}



\begin{figure}[h]
  \begin{center}
    \begin{subfigure}{0.49\columnwidth}
      \centering
      \includegraphics[width=\linewidth]{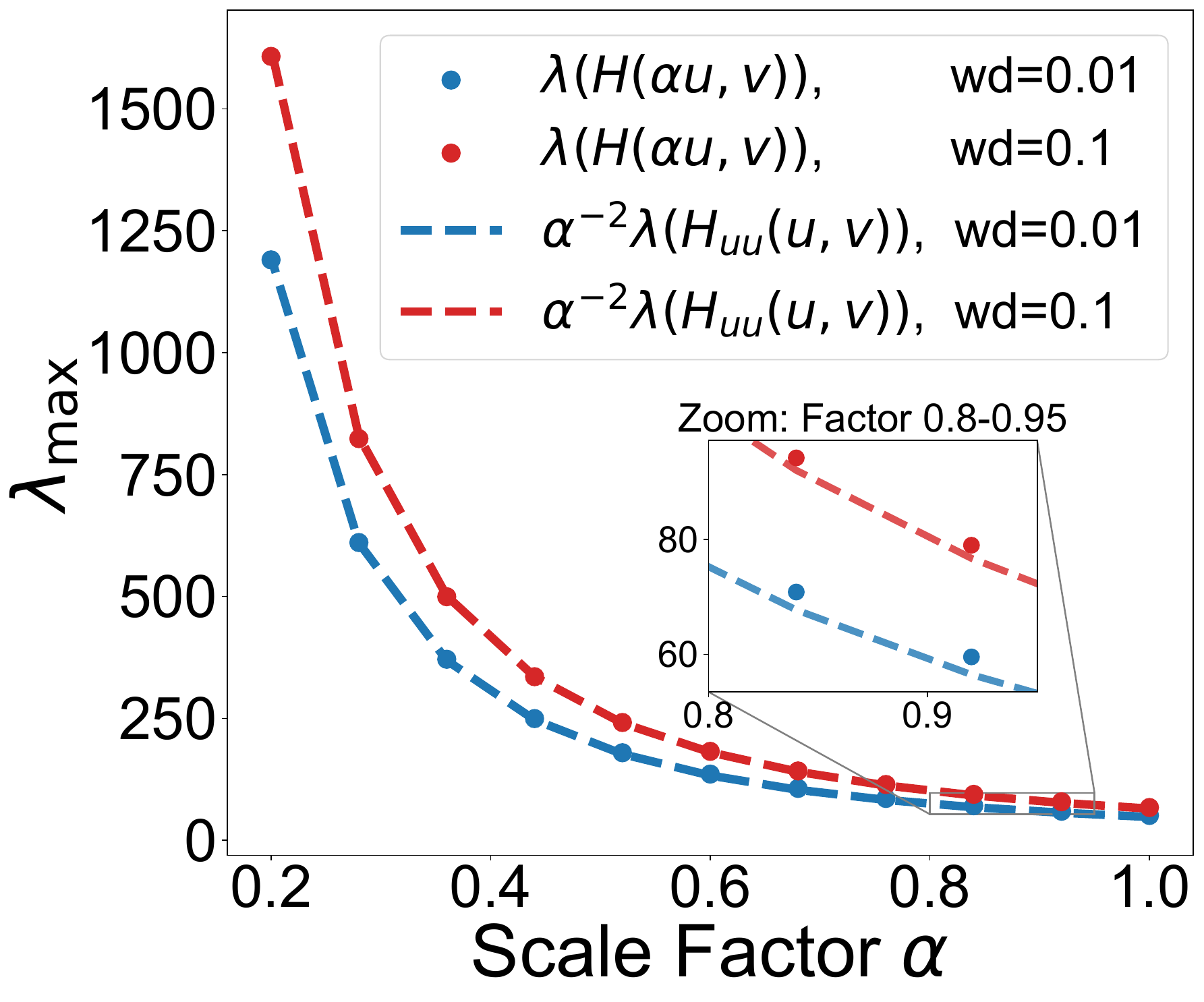}
      \caption{FNN}
      \label{fig:combined_scale_a}
    \end{subfigure}
    \hfill
    \begin{subfigure}{0.49\columnwidth}
      \centering
      \includegraphics[width=\linewidth]{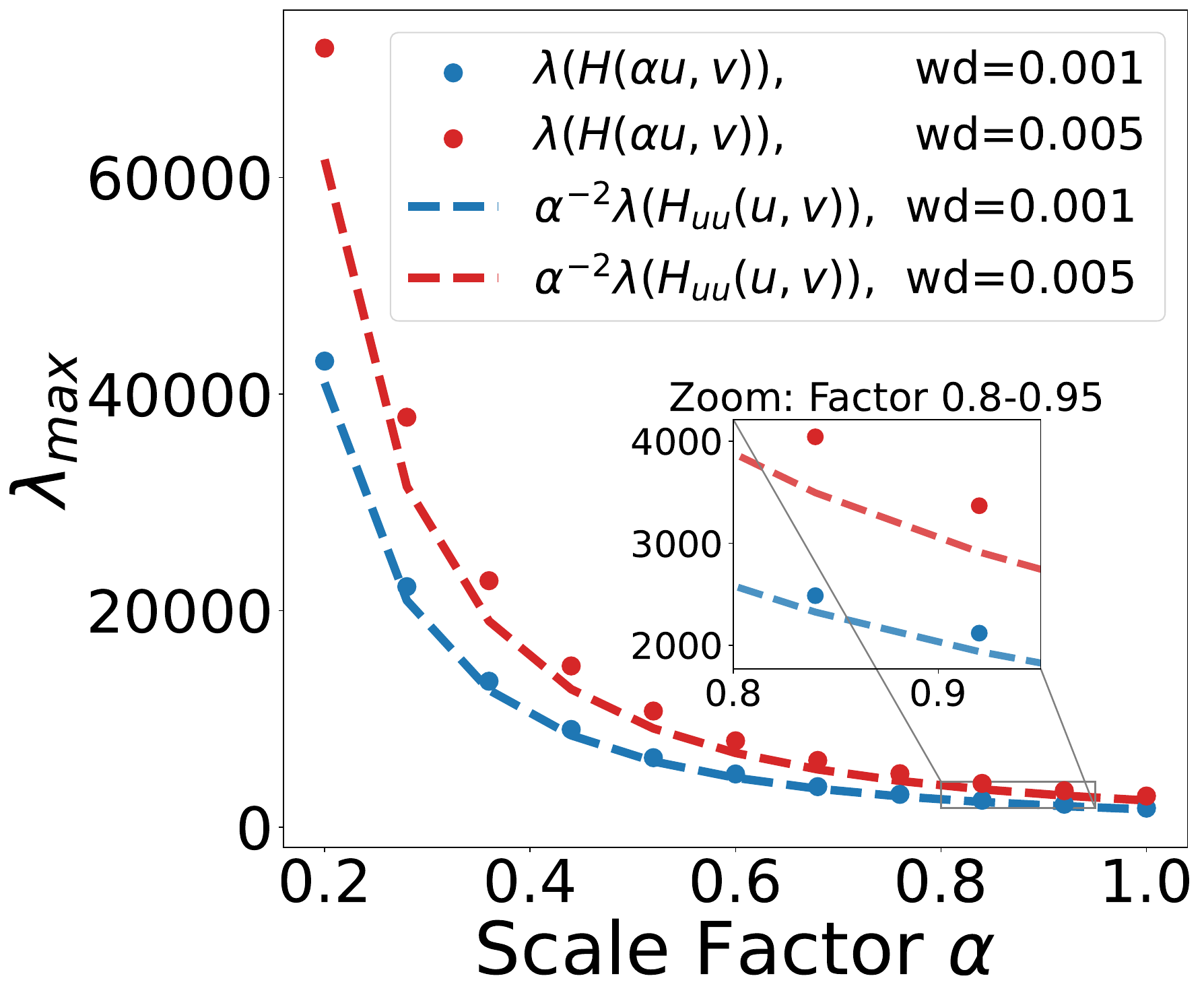}
      \caption{ResNet-50}
      \label{fig:combined_scale_b}
    \end{subfigure}
    \caption{\textbf{Verification of the theoretical lower bound via parameter scaling.}
    (a) Three-layer FNN with BN on synthetic data. 
    The learning rate is fixed to $\eta = 0.03$. 
    (b) ResNet-50 trained on the CIFAR-100 dataset. 
    The learning rate is fixed to $\eta = 0.002$.
    Scatters are the top eigenvalues of the full Hessian plotted against scaled scale-invariant components. The curves represent the direct scaling of the top eigenvalue of the scale-invariant component Hessian, empirically validating the theoretical lower bound predicted by scaling analysis.}
    \label{fig:combined_scale}
  \end{center}
\end{figure}

Theorem~\ref{thm:curvature_explosion} predicts that, under positive scale invariance, shrinking the scale-invariant parameters $u$ by a factor $\alpha$ amplifies the curvature in the $u$-subspace on the order of $\alpha^{-2}$, which in turn induces a lower bound on the growth of $\lambda_{\max}\big(H(\alpha u,v)\big)$ via Eq.~\eqref{eq:curvature_lb_split}. 
We empirically test this prediction by performing controlled parameter scaling and measuring $\lambda_{\max}$ as a function of the scaling factor.

In the three-layer FNN with BN, non-scale-invariant parameters are kept fixed, while scale-invariant parameters are explicitly rescaled, leading to a sharp increase in $\lambda_{\max}$ as the scaling factor decreases (Fig.~\ref{fig:combined_scale_a}). 
For ResNet-50 trained on CIFAR-100, residual connections preclude strict layer-wise scale invariance; accordingly, we rescale only the convolutional weights immediately followed by BN (Conv$\rightarrow$BN), while keeping BN parameters and the final classifier fixed. 
Under this controlled scaling, $\lambda_{\max}$ increases rapidly as the scaling factor decreases, consistent with the theoretical lower bound (Fig.~\ref{fig:combined_scale_b}).




\begin{figure}[t]
  \begin{center}
    \centerline{\includegraphics[width=0.9\columnwidth]{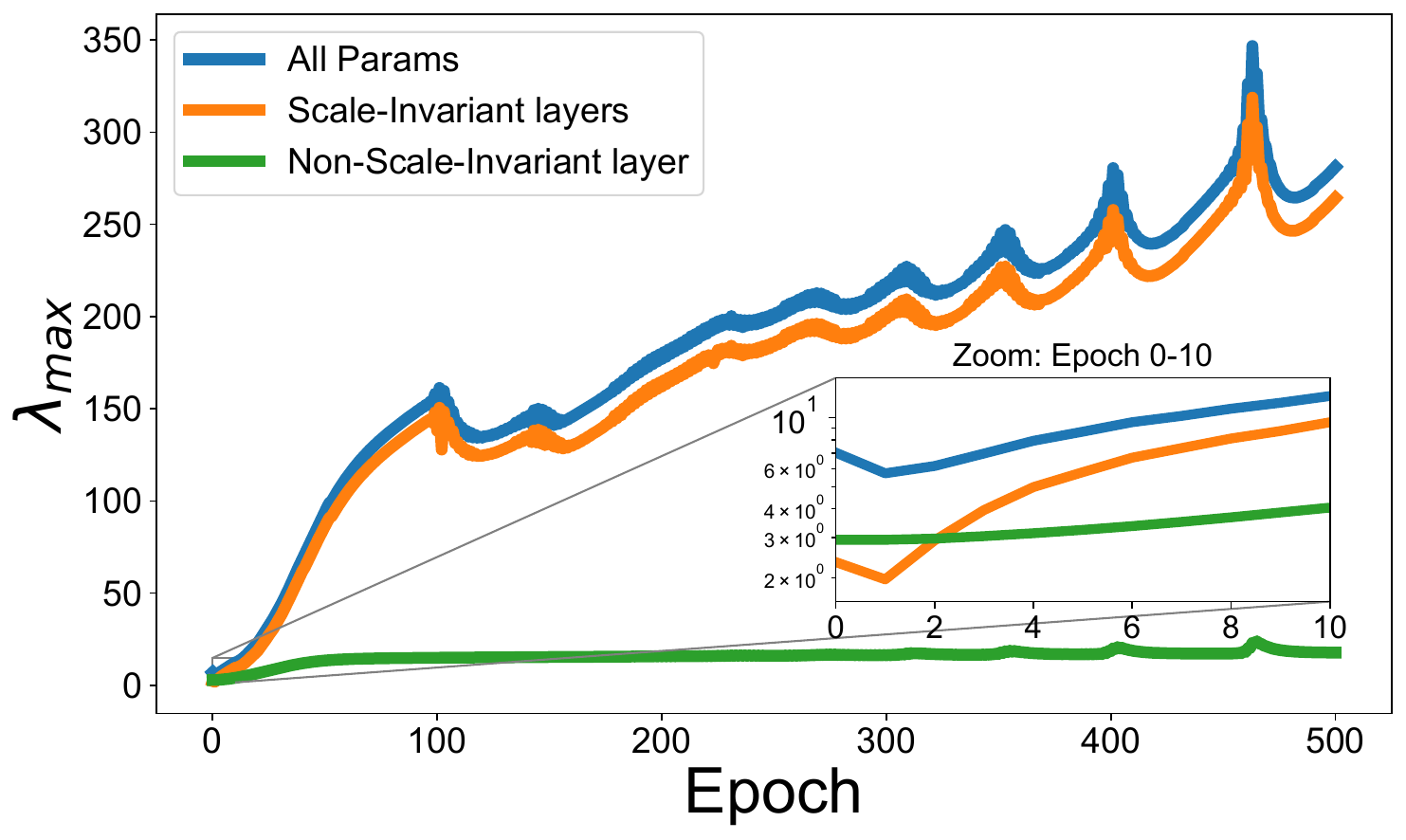}}
    \caption{\textbf{The evolution of the sharpness.} Three-layer FNN with BN on synthetic data. The learning rate and weight decay coefficient are fixed to be $\eta=0.03$ and $\lambda = 0.1$. As training progresses, the $\lambda_{\max}$ of the Hessian in the scale-invariant layer (orange) increases rapidly, approaching that of the full parameter space (blue) and becoming dominant, far exceeding the $\lambda_{\max}$ of the non-scale-invariant layers (green). The zoomed-in view shows the initial training phase, where the sharpness of the scale-invariant and non-scale-invariant blocks remains of the same order.}
    \label{hessian_trajectory_with_zoom}
  \end{center}
\end{figure}

The evolution of $\lambda_{\max}$ of the Hessian blocks associated with the scale-invariant and non-scale-invariant components, together with that of the full Hessian, is reported in Fig.~\ref{hessian_trajectory_with_zoom}. At initialization (epoch $=0$), the maximum eigenvalue of the scale-invariant block (orange) is smaller than that of the non-scale-invariant block (green), while both remain within the same order of magnitude (see the zoomed-in panel in Fig.~\ref{hessian_trajectory_with_zoom}). As training proceeds, the scale-invariant block's maximum eigenvalue increases rapidly and soon surpasses the non-scale-invariant one. By the mid-to-late stages of training, it far exceeds the non-scale-invariant block and becomes the dominant contribution to the global $\lambda_{\max}$ (blue), closely aligning with the scaling behavior implied by the scale-invariant Hessian analysis (Fig.~\ref{hessian_trajectory_with_zoom}).



\begin{figure*}[t]
  \centering
  \begin{subfigure}{0.51\textwidth}
    \centering
    \includegraphics[width=0.9\linewidth]{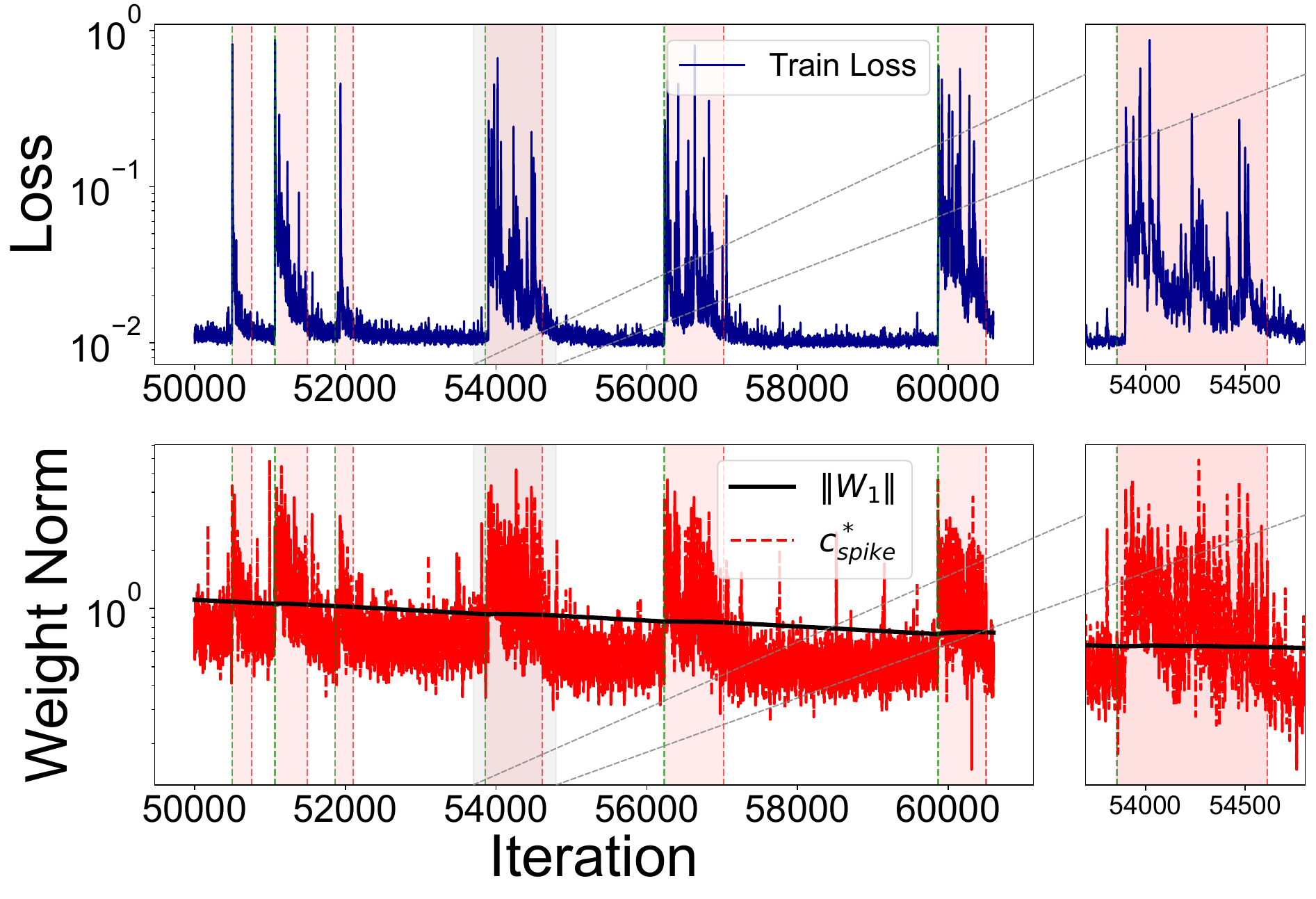}
    \caption{MNIST}
    \label{eos_MNIST}
  \end{subfigure}
  \hfill
  \begin{subfigure}{0.47\textwidth}
    \centering
    \includegraphics[width=0.9\linewidth]{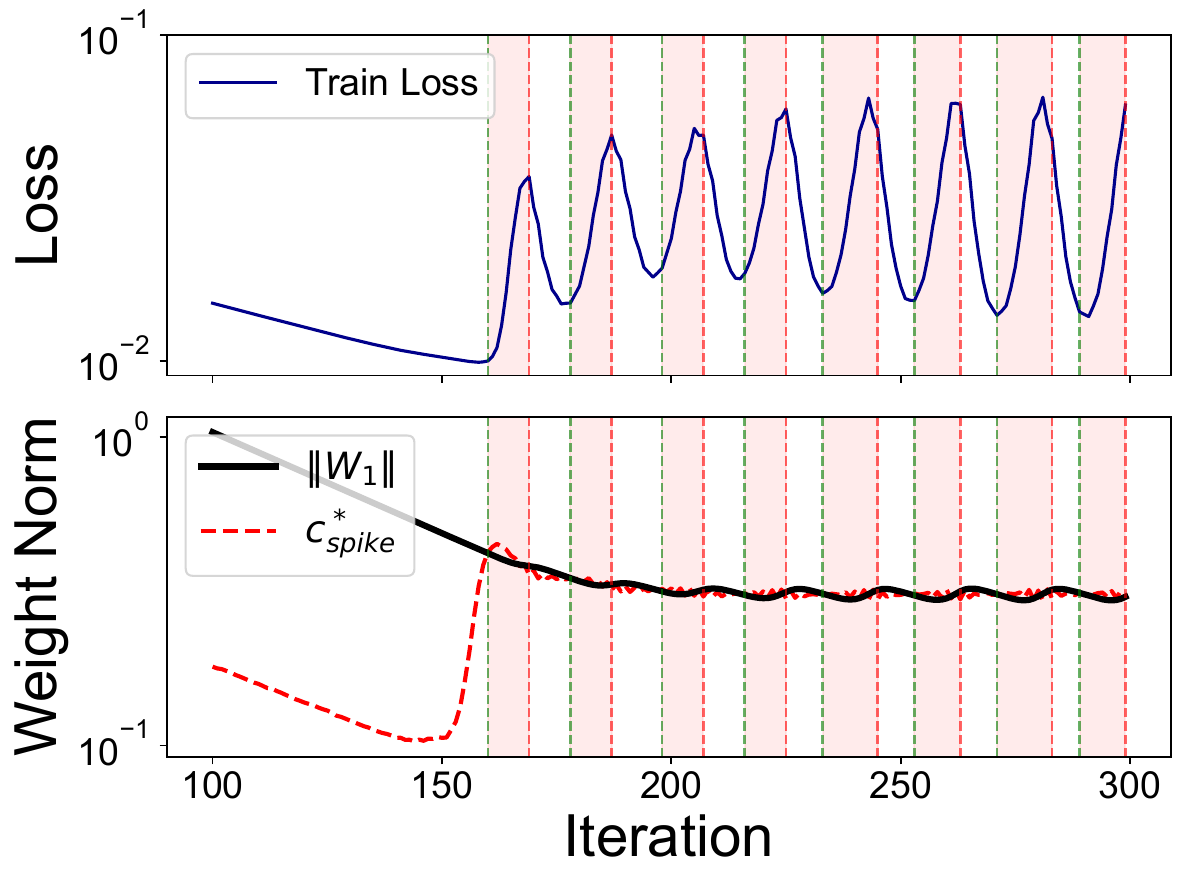}
    \caption{Synthetic data}
    \label{eos_synthetic}
  \end{subfigure}
  \caption{
  \textbf{Weight-norm criticality and instability under BN.}
  Training dynamics of FNN with BN.
  In each subfigure, the top panel shows the training loss trajectory during the later stages of optimization,
  while the bottom panel plots the weight norm $\|W_1\|$ of the first (scale-invariant) layer (black)
  together with the stability boundary $c_{spike}^*$ (red).
  Red shaded regions indicate intervals where $\|W_1\|$ persistently falls below $c_{spike}^*$,
  corresponding to the theoretically predicted unstable regime.
  Pronounced loss spikes align with sustained excursions into this region.
  }
  \label{fig:eos_weight_norm_criticality}
\end{figure*}

\subsection{Weight-norm Criticality}
\label{Weight-norm Criticality}

Recent empirical evidence suggests that modern deep networks are often trained near the \emph{EoS}~\citep{wu2018sgd, cohen2021gradient}, a regime in which the local curvature (sharpness) of the loss landscape approaches the maximum threshold for gradient descent (GD) to remain stable.
For GD with learning rate $\eta$, the standard EoS stability criterion can be expressed as
\begin{equation}
\label{eq:gd_linear_stability}
\eta\,\lambda_{\max}\big(H(u,v)\big) \le 2,
\end{equation}
where $H(u,v)$ denotes the Hessian of $\mathcal{L}(\cdot,v)$ at the current iterate $u$.
Combining Eq.~\eqref{eq:gd_linear_stability} with the homogeneity/scaling property derived above for scale-invariant parameters, we can translate a curvature threshold into a \emph{weight-norm threshold}.
This motivates the definition of a stability boundary on $\|u\|$ that guarantees linear stability (Proposition~\ref{prop:eos_boundary}).

\begin{proposition}[The Weight-Norm Stability Boundary]
\label{prop:eos_boundary}
Let $\rho := \|u\|^2\cdot\lambda_{\max}\big(H_{uu}(u, v)\big)$ be the intrinsic curvature of the loss landscape. The boundary of weight norm $c^*$ required to maintain linear stability is given by:
\begin{equation}
\label{eq:boundary_formula}
    c^* = \sqrt{\frac{\eta\rho}{2}}.
\end{equation}
\end{proposition}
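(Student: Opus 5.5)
The plan is to show that $\rho$ is invariant under the rescaling $u\mapsto\alpha u$. Once that is established, the condition $\eta\,\lammax(H_{uu})\le 2$ becomes a condition on $\|u\|$ alone, with $\rho$ held fixed. The key fact is a block version of the homogeneity behind Theorem~\ref{thm:curvature_explosion}.

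Differentiate the identity $L(\alpha u,v)=L(u,v)$ twice with respect to $u$. This gives
\[
\alpha^{2}\,H_{uu}(\alpha u,v)=H_{uu}(u,v),
\qquad\text{so}\qquad
H_{uu}(\alpha u,v)=\alpha^{-2}H_{uu}(u,v).
\]
This is the same computation as in the proof of Theorem~\ref{thm:curvature_explosion} (Appendix~\ref{app:curvature_explosion_proof}), so I will just invoke it. Consequently
\[
\|\alpha u\|^{2}\,\lammax\big(H_{uu}(\alpha u,v)\big)=\alpha^{2}\|u\|^{2}\,\alpha^{-2}\lammax\big(H_{uu}(u,v)\big)=\rho.
\]
Thus $\rho$ depends only on the direction $u/\|u\|$ and on $v$. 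This justifies calling it the intrinsic curvature, and it lets me write
\[
\lammax\big(H_{uu}(u,v)\big)=\frac{\rho}{\|u\|^{2}}
\]
for every point on the ray $\{\alpha u\}$.

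Next I restrict the linear-stability criterion \eqref{eq:gd_linear_stability} to the scale-invariant block. Consider the GD dynamics linearized in the $u$-subspace with $v$ frozen; this is the block that, per Fig.~\ref{hessian_trajectory_with_zoom} and Theorem~\ref{thm:curvature_explosion}, dominates and lower-bounds $\lammax(H)$. Stability of these dynamics requires $\eta\,\lammax(H_{uu})\le 2$. Since $\lammax(H)\ge\lammax(H_{uu})$ by Cauchy interlacing (a compression of a symmetric matrix), this restricted condition is necessary for full stability. Substituting the expression above gives
\[
\frac{\eta\rho}{\|u\|^{2}}\le 2
\quad\Longleftrightarrow\quad
\|u\|\ge\sqrt{\eta\rho/2}=c^{*}.
\]
So linear stability along the direction $u/\|u\|$ holds exactly when $\|u\|\ge c^{*}$, and $c^{*}$ is the boundary in \eqref{eq:boundary_formula}.

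The main obstacle is interpretive rather than computational. Read as guaranteeing stability, the statement requires the $u$-block to be the one that matters, but the full $\lammax(H)$ also involves $H_{uv}$ and $H_{vv}$. I would handle this in two steps. First, state $c^{*}$ as the threshold for the $u$-block condition, which is necessary for full stability because of the interlacing inequality. Second, argue that as $\|u\|\to 0$ the $\alpha^{-2}$ growth of $H_{uu}$ dominates the off-diagonal blocks, which scale only like $\alpha^{-1}$, and the unchanged $H_{vv}$. Hence $\lammax(H)=\rho/\|u\|^{2}\,(1+o(1))$ near the boundary, and $c^{*}$ is asymptotically the exact threshold.
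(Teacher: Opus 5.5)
Your proposal is correct and follows essentially the same route as the paper: the homogeneity $H_{uu}(\alpha u,v)=\alpha^{-2}H_{uu}(u,v)$ along the ray, Cauchy interlacing to get $\lambda_{\max}(H)\ge\lambda_{\max}(H_{uu})=\rho/\|w\|^2$, and then solving $\eta\rho/\|w\|^2\le 2$ to obtain the necessary condition $\|w\|\ge c^*$. Your extra remarks make explicit what the paper leaves implicit without changing the argument: that $\rho$ is constant on the ray, that $c^*$ gives only a necessary condition, and that $\alpha^2 H(\alpha u,v)\to\operatorname{diag}(H_{uu}(u,v),0)$ makes the bound asymptotically tight as $\|u\|\to 0$ when $\rho>0$.
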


\begin{proof}
Let the current scale-invariant component parameter be $w$. By Theorem \ref{thm:curvature_explosion}, the current sharpness 
\begin{equation}
\begin{aligned}
\lambda_{\max}\big(H(w, v)\big) & \geq \lambda_{\max}\big(H_{uu}(w, v)\big) \\
&=\frac{\|u\|^2}{\|w\|^2}\lambda_{\max}\big(H_{uu}(u, v)\big) \\
&= \frac{\rho}{\|w\|^2}. 
\end{aligned}
\end{equation}
The stability conditions (Eq.~\ref{eq:gd_linear_stability}) requires:
\begin{equation}
    \eta \cdot \lambda_{\max}\big(H(w, v)\big) \le 2 \implies \eta \cdot \frac{\rho}{\|w\|^2} \le 2.
\end{equation}
Solving for $\|w\|$ yields the lower bound $\|w\| \ge \sqrt{\frac{\eta \rho}{2}}$. We define the equality case as the weight-norm stability boundary $c^*$.
\end{proof}

The EoS criterion based on $\lambda_{\max}(H)$ is a \emph{worst-case} condition: it only asserts the existence of a direction with large curvature and characterizes marginal linear stability.
In contrast, whether the loss strictly increases after a single GD step depends on the curvature \emph{along the gradient direction}. This observation was highlighted by \citet{bai2025adaptive}, who formally defined the \emph{gradient-direction curvature} as:
\[
\lambda_{\mathrm{grad}}(H) := \frac{g^\top H g}{\|g\|^2}\qquad(\|g\|>0),
\]
where $g := \nabla \mathcal{L}$. This quantity serves as a primary indicator of potential loss increasing and, more critically, the onset of loss spikes during the optimization process.
A second-order analysis (Appendix~\ref{app:spike_derivation}) shows that a one-step \emph{loss spike} is triggered when
\[
\eta\,\lambda_{\mathrm{grad}}\big(H(u,v)\big) > 2,
\]
and, under the same scale-invariant homogeneity, this condition induces an analogous critical norm threshold.
This motivates the \emph{spike boundary} based on $\lambda_{\mathrm{grad}}$ (Definition~\ref{def:spike_boundary}).

\begin{proposition}[The Weight-Norm Spike Boundary]
\label{def:spike_boundary}
Let $g_u := \nabla_u \mathcal{L}(u,v)$ and define the gradient-direction curvature
\begin{equation}
\label{eq:lambda_grad_def}
\lambda_{\mathrm{grad}}\big(H_{uu}(u,v)\big):=
\dfrac{g_u^\top H_{uu}(u,v)\, g_u}{\|g_u\|^2}.
\end{equation}
Define the intrinsic spike curvature of the loss landscape as
\begin{equation}
\label{eq:rho_grad_def}
\rho_{\mathrm{grad}} := \|u\|^2 \cdot \lambda_{\mathrm{grad}}\big(H_{uu}(u,v)\big).
\end{equation}
Then the weight-norm spike boundary is defined by
\begin{equation}
\label{eq:spike_boundary_formula}
c_{\mathrm{spike}}^* := \sqrt{\frac{\eta\,\rho_{\mathrm{grad}}}{2}}.
\end{equation}
\end{proposition}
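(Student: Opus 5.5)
The plan follows the proof of Proposition~\ref{prop:eos_boundary}, replacing $\lambda_{\max}$ by the gradient-direction curvature. Since the statement is a definition framed as a proposition, what must be shown is the implicit claim: one-step loss spike $\iff \|u\| < c^*_{\mathrm{spike}}$ (for the $u$-block), and $\rho_{\mathrm{grad}}$ is invariant under rescaling of $u$, so the threshold is well defined.

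\textbf{Step 1: how $g_u$ and $H_{uu}$ scale.} Differentiate $L(\alpha u,v)=L(u,v)$ once and twice in $u$. This gives
\[
\nabla_u L(\alpha u,v)=\alpha^{-1}\nabla_u L(u,v),\qquad H_{uu}(\alpha u,v)=\alpha^{-2}H_{uu}(u,v),
\]
the same homogeneity used for Theorem~\ref{thm:curvature_explosion}. Consequently $g_u(\alpha u)$ is parallel to $g_u(u)$. The Rayleigh quotient in Eq.~\eqref{eq:lambda_grad_def} is invariant to the length of $g_u$, so
\[
\lambda_{\mathrm{grad}}\big(H_{uu}(\alpha u,v)\big)=\alpha^{-2}\lambda_{\mathrm{grad}}\big(H_{uu}(u,v)\big).
\]
It follows that $\rho_{\mathrm{grad}}=\|u\|^2\lambda_{\mathrm{grad}}$ depends only on the direction $u/\|u\|$ and on $v$. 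Euler's relation $u^\top g_u=0$ also shows that the gradient step is tangent to the sphere, which makes the restriction to the $u$-block natural.

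\textbf{Step 2: the one-step spike condition.} Take a GD step on $u$, $u^+=u-\eta g_u$, and expand to second order:
\[
L(u^+,v)-L(u,v)\approx -\eta\|g_u\|^2+\tfrac{\eta^2}{2}\,g_u^\top H_{uu}g_u
=\eta\|g_u\|^2\Big(\tfrac{\eta}{2}\lambda_{\mathrm{grad}}-1\Big).
\]
The loss therefore increases exactly when $\eta\lambda_{\mathrm{grad}}(H_{uu})>2$, which is the criterion cited from Appendix~\ref{app:spike_derivation} specialised to the $u$-block.

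\textbf{Step 3: convert to a norm threshold.} Write the current iterate as $w$ with the same direction as $u$. By Step 1, $\lambda_{\mathrm{grad}}(H_{uu}(w,v))=\rho_{\mathrm{grad}}/\|w\|^2$. The spike condition then reads $\eta\rho_{\mathrm{grad}}/\|w\|^2>2$, i.e.\ $\|w\|<\sqrt{\eta\rho_{\mathrm{grad}}/2}=c^*_{\mathrm{spike}}$. The equality case defines the boundary.

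\textbf{Main obstacle.} The second-order expansion is only local. Its validity needs the third-order remainder to be small relative to $\eta^2\|g_u\|^2\lambda_{\mathrm{grad}}$. Near $u=0$ the third derivatives scale like $\|u\|^{-3}$, so this requires $\eta\|g_u\|\ll\|u\|$. I would state the result under this small-effective-step assumption, as the paper does with the EoS criterion, rather than try to control the remainder globally.

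The full-Hessian version, with $v$ also updated, is not implied in general, because cross terms $g_v^\top H_{vu}g_u$ appear. I would either restrict to the $u$-update or argue that the $u$-block dominates as $\|u\|\to0$, since it scales as $\|u\|^{-2}$ while $H_{uv}$ scales only as $\|u\|^{-1}$.
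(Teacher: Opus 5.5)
Your proposal is correct and follows essentially the same route as the paper: the second-order expansion in Appendix~\ref{app:spike_derivation} yielding $\eta\,\lambda_{\mathrm{grad}}>2$, the $\|u\|^{-2}$ scaling of the Rayleigh quotient inherited from Lemma~\ref{thm:hessian_homogeneity}, and the norm-conversion template from the proof of Proposition~\ref{prop:eos_boundary}. Your additions make precise what the paper's appendix leaves informal: that $g_u$ rescales by $\alpha^{-1}$, so its direction is preserved and the Rayleigh quotient's scaling is well defined; the explicit restriction to a $u$-only step; and the remarks on cross terms and the third-order remainder.
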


We revisit the experiments introduced above (Section~\ref{Empirical Evidence of Loss Spikes}) and interpret their late-training dynamics through the lens of our weight-norm criticality theory. Since BN renders the pre-normalization weights scale-invariant, the theory provides a layer-wise stability boundary for each such layer. 
In both the MNIST setting (Fig.~\ref{fig:bn_003}) and the controlled synthetic regression experiment (Fig.~\ref{fig:lr003_combined}), the weight norm of the first layer exhibits the most frequent crossings to its predicted stability boundary. Accordingly, Fig.~\ref{eos_MNIST} and \ref{eos_synthetic} report the first-layer dynamics; the remaining layers are deferred to Appendix~\ref{app:Details of boundary}.

Building on prior empirical findings that temporarily exceeding the stability threshold does not necessarily trigger a pronounced loss spike \citep{li2023loss,bai2025adaptive}, we apply a filtering protocol to the identified unstable intervals. Specifically, in Fig.~\ref{eos_MNIST}, we merge adjacent intervals separated by fewer than $30$ iterations and discard any resulting intervals shorter than $200$ iterations. This procedure effectively suppresses brief, non-macroscopic excursions that do not reflect sustained instability.
Across both experiments, excursions into the predicted unstable regime—i.e., when the weight norm drops below the stability boundary—align with pronounced instabilities in the optimization trajectory, including abrupt loss spikes. To make this correspondence explicit, we mark the onset of instability (first crossing below the boundary) with green vertical dashed lines and the return to stability with red vertical dashed lines.


\section{Training Instability in Large Language Models and Module-wise Dynamics}

\subsection{Training Instability in LLMs}

Regularization plays a central role in training large Transformer models, with weight decay being one of the most commonly used techniques in practice. While weight decay is effective at controlling parameter growth and improving generalization, its interaction with normalization architectures can introduce nontrivial optimization effects. In particular, normalization substantially weakens the dependence of normalized activations on the absolute scale of certain preceding weights, allowing weight decay to continuously shrink these parameter norms with only limited impact on the forward pass.

This partial decoupling between parameter norms and functional behavior can progressively alter gradient magnitudes and local curvature during training. In deep Transformers, such effects may accumulate across layers and, under sufficiently strong weight decay, lead to increased sharpness and training instabilities, including sudden loss spikes, as illustrated in Fig.~\ref{fig:transformer_100b}. Understanding how these instabilities arise in practical Transformer architectures motivates the analysis that follows.

\begin{figure*}[t]
  \begin{center}
\centerline{\includegraphics[width=0.9\textwidth]{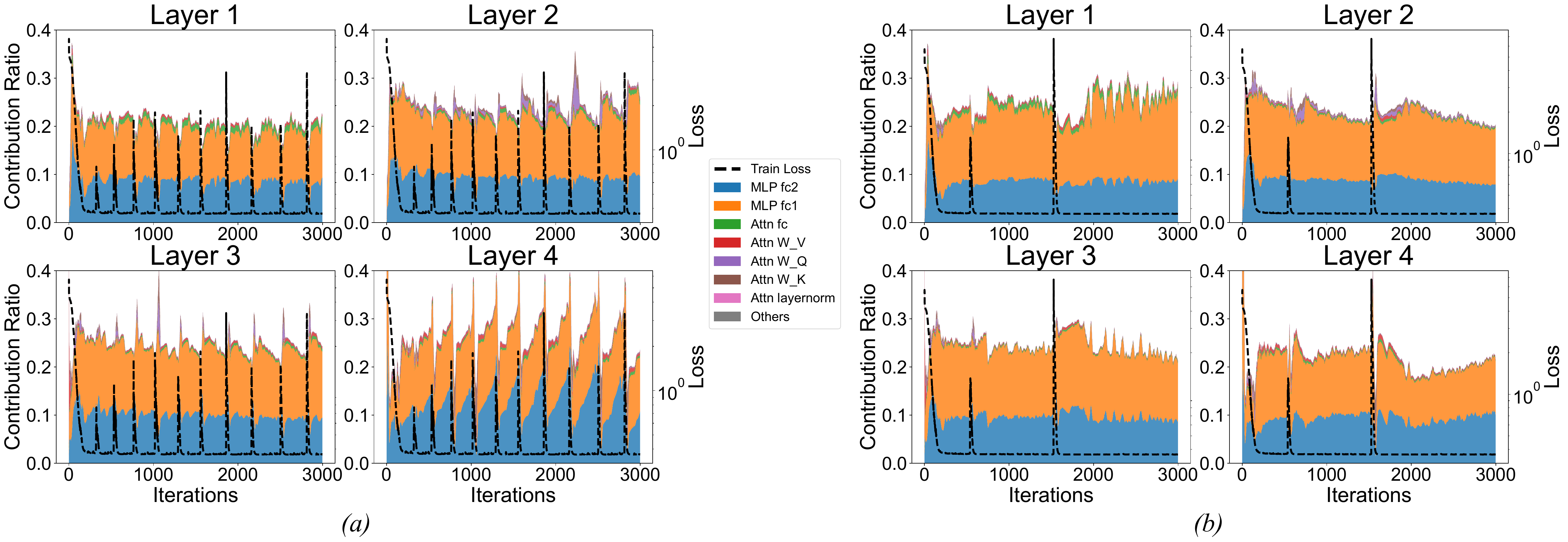}}
    \caption{Evolution of the module-wise decomposition of the top Hessian eigenvector during training on the $3x \to x$ task, following \citet{zhang2024anchorfunctiontypebenchmark}; see Appendix~\ref{app:synthetic_transformer_details} for full details. (a) applies weight decay to all modules, while (b) disables weight decay for the MLP modules. Different colors correspond to different modules.}
    \label{fig:transformer_module_top_eig_frac}
  \end{center}
\end{figure*}

\subsection{Module-wise Hessian Eigenvector Analysis}

We study a controlled synthetic next-token prediction task ($3x \to x$ task) as in \citet{zhang2024anchorfunctiontypebenchmark} with a Transformer trained using AdamW~\citep{loshchilov2017decoupled}.
We sweep the weight decay coefficient while fixing all other hyperparameters, and report the training loss versus epoch in Fig.~\ref{fig:transformer_3x_to_x_ntp_trainloss}; full details are provided in Appendix~\ref{app:synthetic_transformer_details}.

To localize where training curvature concentrates, we track the top eigenvector of the Hessian matrix during the training and decompose it across parameter modules (Appendix~\ref{app:transformer_modulewise_eig}).
Specifically, for each module, we compute the fraction of the squared norm of the top eigenvector supported on that module, thereby quantifying how much each parameter block contributes to the dominant curvature direction.
Fig.~\ref{fig:transformer_module_top_eig_frac}a shows that the Multilayer Perceptron (MLP) blocks dominate the leading-curvature direction and that their norm contribution increases during training, with pronounced changes around loss spikes. This indicates that the instability induced by weight decay in scale-invariant components is not uniformly distributed across modules, but is empirically amplified in the MLP subspace.

Motivated by this observation, we repeat the experiment with weight decay disabled for MLP parameters, keeping all other modules unchanged.

As shown in Fig.~\ref{fig:transformer_module_top_eig_frac}b, the MLP contribution no longer exhibits the steadily increasing trend observed previously, and the number of loss spikes is substantially reduced.

We further evaluate the above observations in a larger-scale setting using a LLaMA-style Transformer with 16 layers and 16 attention heads, comprising approximately 187M parameters. The experimental setup follows the same protocol as in Section~\ref{Empirical Evidence of Loss Spikes} and is detailed in Appendix~\ref{app:llm_pretraining_details}.

\begin{figure}[H]
  \begin{center}
    \centerline{\includegraphics[width=0.8\columnwidth]{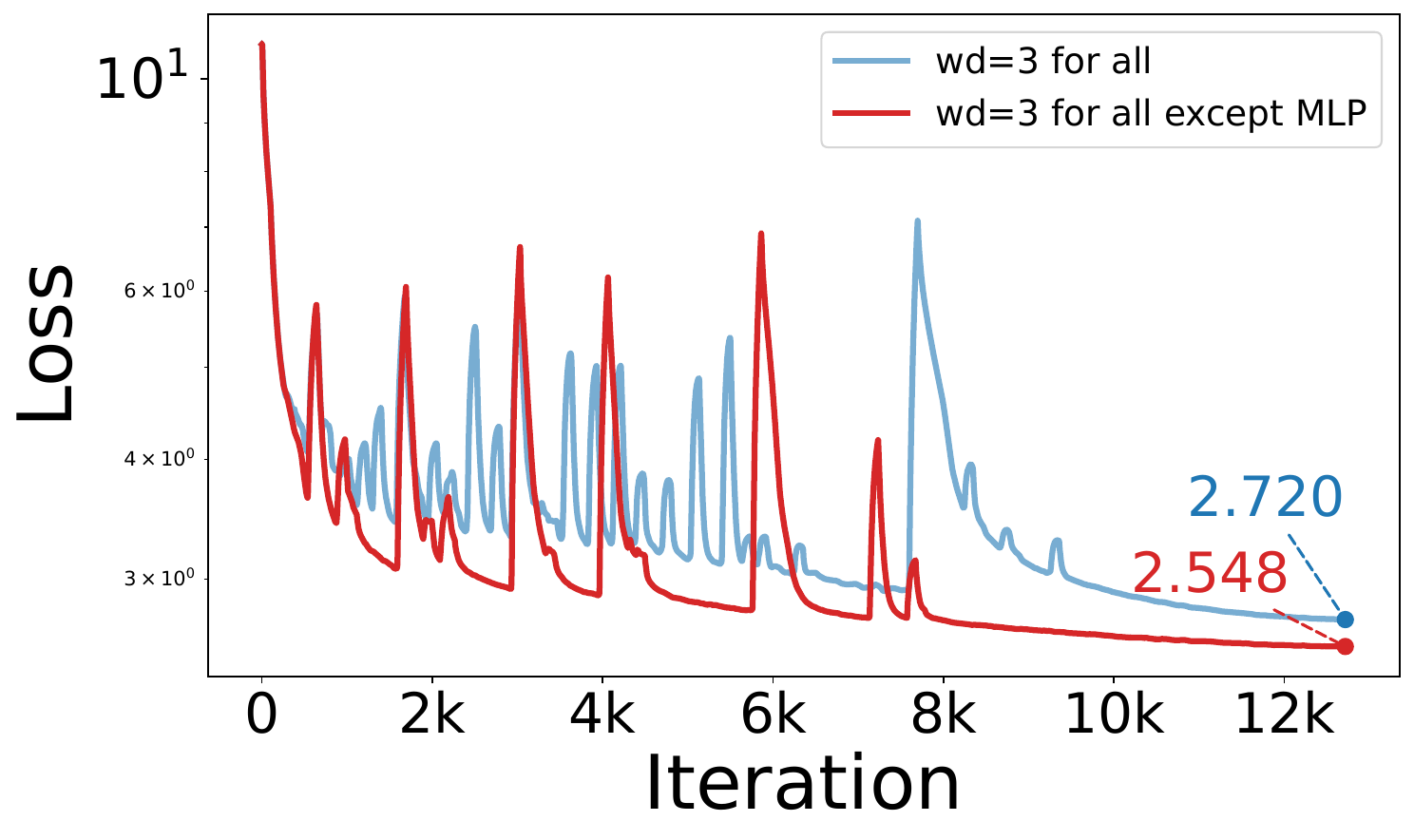}}
    \caption{Evolution of training loss for a 187M-parameter Transformer, comparing standard weight decay against a configuration where weight decay is excluded from the MLP modules in each layer.}
    \label{GPT_wd_3}
  \end{center}
\end{figure}

As shown in Fig.~\ref{GPT_wd_3}, when weight decay is applied to all parameters, the training dynamics become highly unstable (blue), exhibiting frequent and pronounced loss spikes. In contrast, when weight decay is disabled for the MLP parameters (red), the frequency of loss spikes is reduced, and the training loss consistently attains lower values.

These results indicate that a mechanistic understanding of loss spikes can inform practical training strategies for large-scale models. Beyond mitigating training instability, such insights may also enable more effective regularization choices that improve generalization performance.

\section{Discussion}

This work identifies an additional loss-spike phenomenon that is frequently encountered in practice induced by the interaction between normalization and weight decay. While existing explanations commonly attribute loss spikes to factors such as data, loss landscape geometry, or optimization—often unified through the notion of learning-rate criticality. We provide a new mechanistic understanding of loss spikes through the lens of weight-norm criticality.

The weight decay on scale-invariant components causes a substantial increase in curvature in the parameter subspaces corresponding to scale-invariant components, making the loss landscape increasingly sharp. Building on this observation, we provide a quantitative analysis of how the decay of scale-invariant parameter norms influences the overall sharpness of the loss landscape. We quantify this effect by defining a weight-norm criticality for each scale-invariant component. Concretely, when the parameter norm of a scale-invariant component crosses the stability boundary—or, in more severe cases, the spike boundary—the training dynamics become unstable and may exhibit loss spikes. Importantly, our analysis does not rely on global scale-invariance assumptions, and therefore applies to practical deep neural networks that include normalization layers such as LN. Furthermore, by evaluating weight-norm criticality at the level of individual components, our framework enables both the identification of the components responsible for loss spikes and the prediction of impending training instabilities. We empirically validate these theoretical boundaries on Transformer and ResNet-50, where the observed behavior closely matches the theoretical predictions.

The mechanisms underlying loss spikes in general settings remain incompletely understood. Meanwhile, the implications of loss spikes for generalization are also not yet well understood. In contemporary large-scale model training, loss spikes are typically regarded as undesirable instabilities and are therefore avoided in practice, even though they may in some cases coincide with improved test performance. This work offers a new perspective by highlighting the often overlooked weight-norm criticality. For complex models such as Transformers, analytical tools are necessarily limited, and the direct application of existing theoretical results faces substantial challenges. We hope that this analysis can serve as a basis for further investigation into training instability in neural networks.






\section*{Impact Statement}

The rapid scaling of deep learning systems has made training stability a central concern in modern machine learning practice. Sudden instabilities during optimization—such as loss spikes—can significantly increase computational cost, complicate reproducibility, and limit the reliability of large-scale model development. This work contributes to a broader understanding of training instability by showing that architectural design choices—alongside data and optimization—can decisively shape optimization dynamics. This work is intended to improve the robustness, transparency, and controllability of machine learning models. We do not anticipate direct negative societal impacts arising from this research. As with many foundational advances in optimization and learning theory, its broader implications will depend on how the resulting insights are adopted in downstream applications. We hope this work encourages further investigation into the fundamental sources of instability in learning systems and contributes to the development of more reliable and interpretable training methodologies.

\nocite{langley00}

\bibliography{icml_reference}

@article{van2017l2,
  title={L2 regularization versus batch and weight normalization},
  author={Van Laarhoven, Twan},
  journal={arXiv preprint arXiv:1706.05350},
  year={2017}
}

@inproceedings{
zhang2018three,
title={Three Mechanisms of Weight Decay Regularization},
author={Guodong Zhang and Chaoqi Wang and Bowen Xu and Roger Grosse},
booktitle={International Conference on Learning Representations},
year={2019},
url={https://openreview.net/forum?id=B1lz-3Rct7},
}

@inproceedings{NEURIPS2018_a0160709,
 author = {Hoffer, Elad and Banner, Ron and Golan, Itay and Soudry, Daniel},
 booktitle = {Advances in Neural Information Processing Systems},
 editor = {S. Bengio and H. Wallach and H. Larochelle and K. Grauman and N. Cesa-Bianchi and R. Garnett},
 pages = {},
 publisher = {Curran Associates, Inc.},
 title = {Norm matters: efficient and accurate normalization schemes in deep networks},
 volume = {31},
 year = {2018}
}

@inproceedings{NEURIPS2020_a7453a5f,
 author = {Li, Zhiyuan and Lyu, Kaifeng and Arora, Sanjeev},
 booktitle = {Advances in Neural Information Processing Systems},
 editor = {H. Larochelle and M. Ranzato and R. Hadsell and M.F. Balcan and H. Lin},
 pages = {14544--14555},
 publisher = {Curran Associates, Inc.},
 title = {Reconciling Modern Deep Learning with Traditional Optimization Analyses: The Intrinsic Learning Rate},
 volume = {33},
 year = {2020}
}

@inproceedings{NEURIPS2022_dffd1c52,
 author = {Lyu, Kaifeng and Li, Zhiyuan and Arora, Sanjeev},
 booktitle = {Advances in Neural Information Processing Systems},
 editor = {S. Koyejo and S. Mohamed and A. Agarwal and D. Belgrave and K. Cho and A. Oh},
 pages = {34689--34708},
 publisher = {Curran Associates, Inc.},
 title = {Understanding the Generalization Benefit of Normalization Layers: Sharpness Reduction},
 volume = {35},
 year = {2022}
}

@inproceedings{NEURIPS2019_cb3ce9b0,
 author = {Chiley, Vitaliy and Sharapov, Ilya and Kosson, Atli and Koster, Urs and Reece, Ryan and Samaniego de la Fuente, Sofia and Subbiah, Vishal and James, Michael},
 booktitle = {Advances in Neural Information Processing Systems},
 editor = {H. Wallach and H. Larochelle and A. Beygelzimer and F. d\textquotesingle Alch\'{e}-Buc and E. Fox and R. Garnett},
 pages = {},
 publisher = {Curran Associates, Inc.},
 title = {Online Normalization for Training Neural Networks},
 volume = {32},
 year = {2019}
}

@inproceedings{NEURIPS2021_326a8c05,
 author = {Wan, Ruosi and Zhu, Zhanxing and Zhang, Xiangyu and Sun, Jian},
 booktitle = {Advances in Neural Information Processing Systems},
 editor = {M. Ranzato and A. Beygelzimer and Y. Dauphin and P.S. Liang and J. Wortman Vaughan},
 pages = {6380--6391},
 publisher = {Curran Associates, Inc.},
 title = {Spherical Motion Dynamics: Learning Dynamics of Normalized Neural Network using SGD and Weight Decay},
 volume = {34},
 year = {2021}
}

@article{li2019exponential,
  title={An exponential learning rate schedule for deep learning},
  author={Li, Zhiyuan and Arora, Sanjeev},
  journal={arXiv preprint arXiv:1910.07454},
  year={2019}
}

@article{Li_Chen_Yang_2020, title={Understanding the Disharmony between Weight Normalization Family and Weight Decay}, volume={34}, url={https://ojs.aaai.org/index.php/AAAI/article/view/5904}, DOI={10.1609/aaai.v34i04.5904}, number={04}, journal={Proceedings of the AAAI Conference on Artificial Intelligence}, author={Li, Xiang and Chen, Shuo and Yang, Jian}, year={2020}, month={Apr.}, pages={4715-4722} }

@inproceedings{NEURIPS2021_b433da1b,
 author = {Lobacheva, Ekaterina and Kodryan, Maxim and Chirkova, Nadezhda and Malinin, Andrey and Vetrov, Dmitry P},
 booktitle = {Advances in Neural Information Processing Systems},
 editor = {M. Ranzato and A. Beygelzimer and Y. Dauphin and P.S. Liang and J. Wortman Vaughan},
 pages = {21545--21556},
 publisher = {Curran Associates, Inc.},
 title = {On the Periodic Behavior of Neural Network Training with Batch Normalization and Weight Decay},
 volume = {34},
 year = {2021}
}

@inproceedings{NEURIPS2022_5aea56ee,
 author = {Kodryan, Maxim and Lobacheva, Ekaterina and Nakhodnov, Maksim and Vetrov, Dmitry P},
 booktitle = {Advances in Neural Information Processing Systems},
 editor = {S. Koyejo and S. Mohamed and A. Agarwal and D. Belgrave and K. Cho and A. Oh},
 pages = {14058--14070},
 publisher = {Curran Associates, Inc.},
 title = {Training Scale-Invariant Neural Networks on the Sphere Can Happen in Three Regimes},
 volume = {35},
 year = {2022}
}

@article{lewkowycz2020large,
  title={The large learning rate phase of deep learning: the catapult mechanism},
  author={Lewkowycz, Aitor and Bahri, Yasaman and Dyer, Ethan and Sohl-Dickstein, Jascha and Gur-Ari, Guy},
  journal={arXiv preprint arXiv:2003.02218},
  year={2020}
}

@article{chowdhery2023palm,
  title={Palm: Scaling language modeling with pathways},
  author={Chowdhery, Aakanksha and Narang, Sharan and Devlin, Jacob and Bosma, Maarten and Mishra, Gaurav and Roberts, Adam and Barham, Paul and Chung, Hyung Won and Sutton, Charles and Gehrmann, Sebastian and others},
  journal={Journal of Machine Learning Research},
  volume={24},
  number={240},
  pages={1--113},
  year={2023}
}

@inproceedings{zhang2024anchorfunctiontypebenchmark,
      title={Anchor function: a type of benchmark functions for studying language models}, 
      author={Zhongwang Zhang and Zhiwei Wang and Junjie Yao and Zhangchen Zhou and Xiaolong Li and Weinan E and Zhi-Qin John Xu},
      year={2025},
      url={https://arxiv.org/abs/2401.08309}, 
    booktitle={ICLR 2025 Workshop Bridging the Gap Between Practice and Theory in Deep Learning}
}

@inproceedings{
cohen2021gradient,
title={Gradient Descent on Neural Networks Typically Occurs at the Edge of Stability},
author={Jeremy Cohen and Simran Kaur and Yuanzhi Li and J Zico Kolter and Ameet Talwalkar},
booktitle={International Conference on Learning Representations},
year={2021},
url={https://openreview.net/forum?id=jh-rTtvkGeM}
}

@article{wu2018sgd,
  title={How sgd selects the global minima in over-parameterized learning: A dynamical stability perspective},
  author={Wu, Lei and Ma, Chao and E, Weinan},
  journal={Advances in Neural Information Processing Systems},
  volume={31},
  year={2018}
}

@inproceedings{ahn2022understanding,
  title={Understanding the unstable convergence of gradient descent},
  author={Ahn, Kwangjun and Zhang, Jingzhao and Sra, Suvrit},
  booktitle={International conference on machine learning},
  pages={247--257},
  year={2022},
  organization={PMLR}
}

@article{xing2018walk,
  title={A walk with sgd},
  author={Xing, Chen and Arpit, Devansh and Tsirigotis, Christos and Bengio, Yoshua},
  journal={arXiv preprint arXiv:1802.08770},
  year={2018}
}

@article{wang2022analyzing,
  title={Analyzing sharpness along GD trajectory: Progressive sharpening and edge of stability},
  author={Wang, Zixuan and Li, Zhouzi and Li, Jian},
  journal={Advances in Neural Information Processing Systems},
  volume={35},
  pages={9983--9994},
  year={2022}
}

@inproceedings{
damian2023selfstabilization,
title={Self-Stabilization: The Implicit Bias of Gradient Descent at the Edge of Stability},
author={Alex Damian and Eshaan Nichani and Jason D. Lee},
booktitle={The Eleventh International Conference on Learning Representations },
year={2023},
url={https://openreview.net/forum?id=nhKHA59gXz}
}

@inproceedings{
Jastrzebski2020The,
title={The Break-Even Point on Optimization Trajectories of Deep Neural Networks},
author={Stanislaw Jastrzebski and Maciej Szymczak and Stanislav Fort and Devansh Arpit and Jacek Tabor and Kyunghyun Cho* and Krzysztof Geras*},
booktitle={International Conference on Learning Representations},
year={2020},
url={https://openreview.net/forum?id=r1g87C4KwB}
}

@inproceedings{arora2022understanding,
  title={Understanding gradient descent on the edge of stability in deep learning},
  author={Arora, Sanjeev and Li, Zhiyuan and Panigrahi, Abhishek},
  booktitle={International Conference on Machine Learning},
  pages={948--1024},
  year={2022},
  organization={PMLR}
}

@inproceedings{ma2022qualitative,
  title={A qualitative study of the dynamic behavior for adaptive gradient algorithms},
  author={Ma, Chao and Wu, Lei and E, Weinan},
  booktitle={Mathematical and scientific machine learning},
  pages={671--692},
  year={2022},
  organization={PMLR}
}

@article{li2023loss,
  title={Loss spike in training neural networks},
  author={Li, Xiaolong and Xu, Zhi-Qin John and Zhang, Zhongwang},
  journal={Journal of Computational Mathematics},
  year={2025}
}

@article{molybog2023theory,
  title={A theory on adam instability in large-scale machine learning},
  author={Molybog, Igor and Albert, Peter and Chen, Moya and DeVito, Zachary and Esiobu, David and Goyal, Naman and Koura, Punit Singh and Narang, Sharan and Poulton, Andrew and Silva, Ruan and others},
  journal={arXiv preprint arXiv:2304.09871},
  year={2023}
}

@inproceedings{
Jastrzebski2018on,
title={On the Relation Between the Sharpest Directions of {DNN} Loss and the {SGD} Step Length},
author={Stanislaw Jastrzebski and Zachary Kenton and Nicolas Ballas and Asja Fischer and Yoshua Bengio and Amost Storkey},
booktitle={International Conference on Learning Representations},
year={2019},
url={https://openreview.net/forum?id=SkgEaj05t7},
}

@article{touvron2023llama,
  title={Llama 2: Open foundation and fine-tuned chat models},
  author={Touvron, Hugo and Martin, Louis and Stone, Kevin and Albert, Peter and Almahairi, Amjad and Babaei, Yasmine and Bashlykov, Nikolay and Batra, Soumya and Bhargava, Prajjwal and Bhosale, Shruti and others},
  journal={arXiv preprint arXiv:2307.09288},
  year={2023}
}

@article{bai2025adaptive,
  title={Adaptive Preconditioners Trigger Loss Spikes in Adam},
  author={Bai, Zhiwei and Zhou, Zhangchen and Zhao, Jiajie and Li, Xiaolong and Li, Zhiyu and Xiong, Feiyu and Yang, Hongkang and Zhang, Yaoyu and Xu, Zhi-Qin John},
  journal={arXiv preprint arXiv:2506.04805},
  year={2025}
}

@inproceedings{ioffe2015batch,
  title={Batch normalization: Accelerating deep network training by reducing internal covariate shift},
  author={Ioffe, Sergey and Szegedy, Christian},
  booktitle={International conference on machine learning},
  pages={448--456},
  year={2015},
  organization={pmlr}
}

@article{ba2016layer,
  title={Layer normalization},
  author={Ba, Jimmy Lei and Kiros, Jamie Ryan and Hinton, Geoffrey E},
  journal={arXiv preprint arXiv:1607.06450},
  year={2016}
}

@article{fisk2005very,
  title={A very short proof of Cauchy's interlace theorem for eigenvalues of Hermitian matrices},
  author={Fisk, Steve},
  journal={arXiv preprint math/0502408},
  year={2005}
}

@inproceedings{he2016deep,
  title={Deep residual learning for image recognition},
  author={He, Kaiming and Zhang, Xiangyu and Ren, Shaoqing and Sun, Jian},
  booktitle={Proceedings of the IEEE conference on computer vision and pattern recognition},
  pages={770--778},
  year={2016}
}

@TECHREPORT{Krizhevsky09learningmultiple,
            author={Alex Krizhevsky},
            title={Learning multiple layers of features from tiny images},
            institution={},
            year={2009}
}

@article{vaswani2017attention,
  title={Attention is all you need},
  author={Vaswani, Ashish and Shazeer, Noam and Parmar, Niki and Uszkoreit, Jakob and Jones, Llion and Gomez, Aidan N and Kaiser, {\L}ukasz and Polosukhin, Illia},
  journal={Advances in neural information processing systems},
  volume={30},
  year={2017}
}

@article{lecun2010mnist,
         title={MNIST handwritten digit database},
         author={LeCun, Yann and Cortes, Corinna and Burges, CJ},
         journal={ATT Labs [Online]. Available: http://yann.lecun.com/exdb/mnist},
         volume={2},
         year={2010}
}

@article{loshchilov2017decoupled,
  title={Decoupled weight decay regularization},
  author={Loshchilov, Ilya and Hutter, Frank},
  journal={arXiv preprint arXiv:1711.05101},
  year={2017}
}
\bibliographystyle{icml2026}

\newpage
\appendix
\onecolumn

\section{Appendix}


\subsection{Experimental Details}
\label{app:exp_details}

\subsubsection{Large-scale language model pretraining}
\label{app:llm_pretraining_details}
We use a LLaMA-style Transformer with 16 layers and 16 attention heads.
Each head has dimensionality 80, yielding a hidden size of 1280; the feed-forward hidden size is set equal to the model hidden size.
The model is trained for a single epoch on a 100B-token dataset.
Training uses AdamW with $\beta_1 = 0.9$ and $\beta_2 = 0.999$.
The learning rate follows a linear warmup from zero to a peak of $1\times10^{-3}$ and then decays to a minimum of $1\times10^{-4}$.
We apply global-norm gradient clipping at 1 for stability.
Weight decay is swept over $\{0,\,0.5,\,1\}$, with all other settings held fixed.

\subsubsection{ResNet-50 on CIFAR-100}
\label{app:resnet_cifar_details}
We train a standard ResNet-50 on CIFAR-100 using SGD with batch size 256 and a fixed learning rate of 0.003.
Weight decay is the only swept hyperparameter; all other settings are held constant across runs.

\subsubsection{Mechanistic probe on MNIST}
\label{app:mnist_probe_details}

We study image classification on MNIST using a FNN trained with SGD, where normalization can be optionally enabled.
Given an input image $x\in\mathbb{R}^{d}$, the model flattens the input and applies $L=4$ hidden layers of width $h=512$, followed by a linear output layer producing 10 logits.
Each hidden block uses
\[
\texttt{Linear}\;\rightarrow\;\texttt{Norm}\;\rightarrow\;\texttt{Tanh},
\]
where \texttt{Norm} is either BN or LN; when \texttt{norm=none}, the block reduces to $\texttt{Linear}\rightarrow\tanh$.
No normalization is applied to the final output layer.
All models are trained on MNIST with cross-entropy loss using mini-batch SGD (batch size 256).

We consider two groups of experiments:
\emph{(i) Non-normalized baseline.}
In the FNN \emph{without} normalization, weight decay at a fixed level does not reliably induce loss spikes when the learning rate is modest; accordingly, we fix the learning rate to $\eta=0.003$ (Fig.~\ref{fig:none_003}).

\emph{(ii) Normalized (scale-invariant) variants.}
We insert \texttt{BN} or \texttt{LN} into each hidden block to obtain scale-invariant variants and examine how weight decay behaves under this induced scale invariance.
For each normalization choice, we fix the learning rate to $\eta=0.003$ and sweep the weight decay coefficient over
\[
\lambda\in\{0,\,0.001,\,0.01,\,0.03\}.
\]
To ensure a clean comparison when normalization is enabled, we do not apply weight decay to the normalization layers.
Results are shown in Fig.~\ref{fig:none_003} (no normalization) and Fig.~\ref{fig:bn_003}~\ref{fig:ln_003} (BN/LN).

\subsubsection{Fully controlled regression on synthetic data}
\label{app:minimal_controlled_experiment}

We consider a fully controlled regression task designed to reproduce loss spikes in a minimal and analytically tractable setting.
The model is a three-layer FNN that maps a two-dimensional input to a one-dimensional output, with two ReLU-activated hidden layers followed by a linear readout.

Normalization (either BN or LN) is applied after each of the first two affine transformations, while the final output layer remains unnormalized.
The network is defined as
\[
\begin{aligned}
h_1 &= \mathrm{ReLU}\!\left(\texttt{Norm}(W_1 x + b_1)\right), \\
h_2 &= \mathrm{ReLU}\!\left(\texttt{Norm}(W_2 h_1 + b_2)\right), \\
\hat{y} &= W_3 h_2 + b_3,
\end{aligned}
\]
where $W_1 \in \mathbb{R}^{D \times 2}$, $b_1 \in \mathbb{R}^{D}$,
$W_2 \in \mathbb{R}^{D \times D}$, $b_2 \in \mathbb{R}^{D}$,
$W_3 \in \mathbb{R}^{1 \times D}$, and $b_3 \in \mathbb{R}$.

The regression target is defined as
\[
y = x_1 + 2x_2,
\]
with input $(x_1, x_2) \in \mathbb{R}^2$.

We adopt a two-dimensional input to avoid degeneracies that can arise in scale-invariant networks with one-dimensional inputs.
In the one-dimensional case, the first-layer weight reduces to a scalar, which admits no nontrivial orthogonal direction; under normalization, this can induce vanishing gradients and effectively freeze the corresponding parameters, obscuring the dynamics of interest.

The model is optimized using full-batch gradient descent with mean squared error (MSE) loss.
For learning rates $\eta \in \{0.01,\,0.03\}$, we sweep the weight decay coefficient and record the training loss as a function of epoch.
The resulting loss trajectories are shown in Fig.~\ref{fig:loss_2x1y}.

\subsubsection{Controlled synthetic next-token prediction with Transformer}
\label{app:synthetic_transformer_details}

\paragraph{Synthetic \texorpdfstring{$3x \!\to\! x$}{3x→x} next-token prediction task.}
We study a controlled synthetic next-token prediction task following the one-anchor identity-learning formulation introduced in \cite{zhang2024anchorfunctiontypebenchmark}. 
Each input sequence has fixed length $n=9$ and contains a designated anchor token ``3'' that appears exactly once among the first $n-1$ positions. 
The learning objective is to predict the token immediately following the anchor, i.e., for an input sequence $(\ldots,3,x,\ldots)$ the correct output is $x$, while all other tokens in the sequence are irrelevant.
Tokens are represented using one-hot encodings over a finite vocabulary, and supervision is applied only at the final sequence position via a cross-entropy loss, turning the problem into a well-defined next-token classification task.

\paragraph{Model and optimization.}
We train a $4$-layer, single-head Transformer on this task.
The model dimension is set to $d_{\text{model}}=400$, with feedforward width $d_{\text{ff}}=1200$, and attention dimensions $d_Q=d_K=d_V=64$.
Training is performed using AdamW with a fixed learning rate $\eta=10^{-3}$ and momentum parameters $(\beta_1,\beta_2)=(0.9,0.999)$.
We apply global-norm gradient clipping with threshold $1$.
The batch size is set equal to the full training set size ($1800$), so that each epoch corresponds to a full-batch parameter update.
Following the main text, we sweep the weight decay coefficient while keeping all other hyperparameters fixed. Figure~\ref{fig:transformer_3x_to_x_ntp_trainloss} reports the training loss trajectories of the $3x\!\to\!x$ task under different weight decay coefficients. While all configurations optimize the same objective, varying weight decay leads to markedly different optimization dynamics, highlighting the sensitivity of this controlled task to regularization strength.


\begin{figure}[h]
  \begin{center}
    \centerline{\includegraphics[width=0.5\columnwidth]{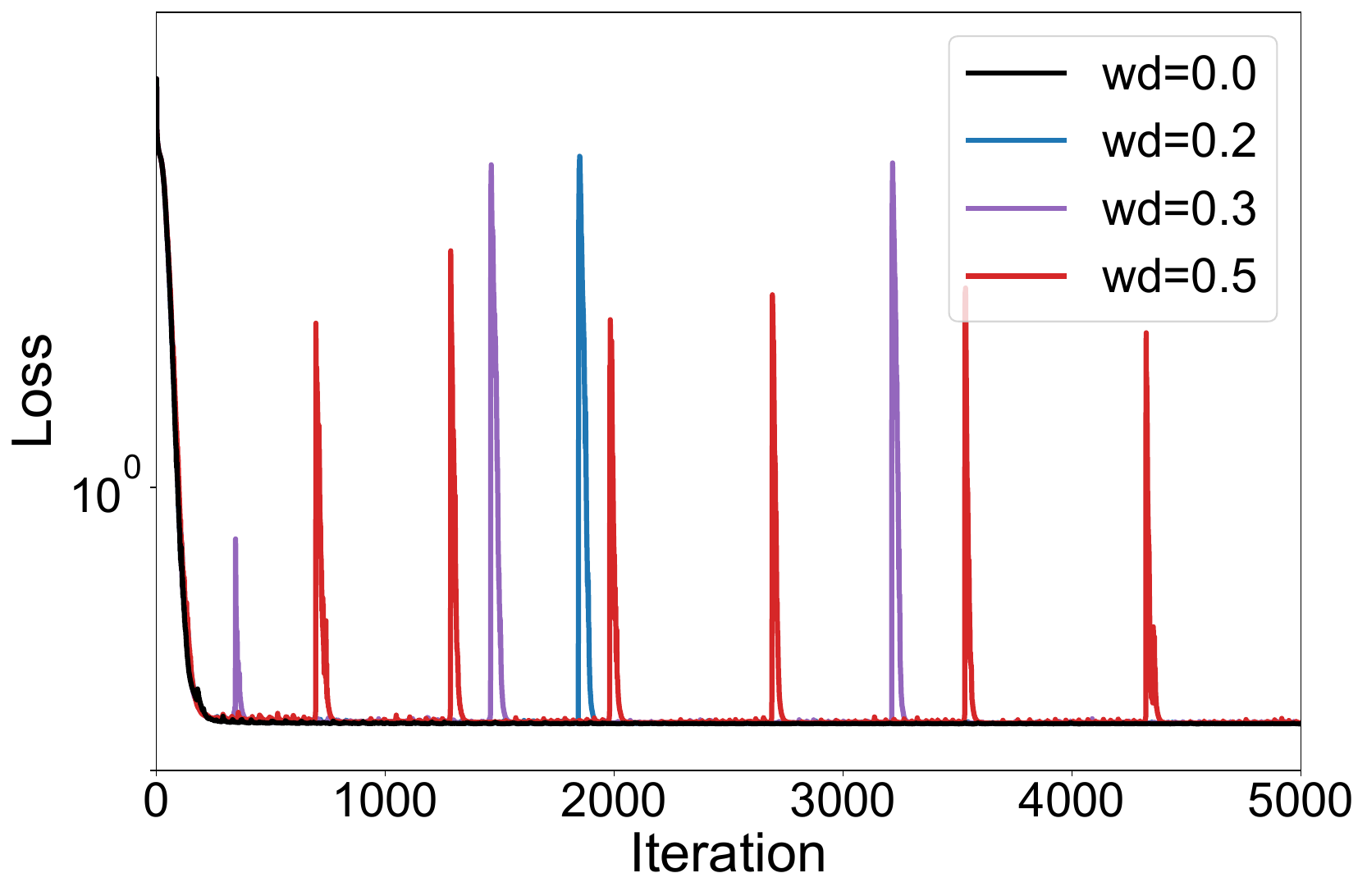}}
        \caption{\textbf{Training loss on the synthetic $3x\!\to\!x$ next-token prediction task under varying weight decay.}
    Shown are full-batch training loss trajectories for different weight decay coefficients.
    Each iteration corresponds to a single full-batch parameter update.
    All other model and optimization settings are kept fixed.
    }

    \label{fig:transformer_3x_to_x_ntp_trainloss}
  \end{center}
\end{figure}

\subsection{Weight-Norm Shrinkage in Scale-Invariant Layers}

We plot the evolution of weight norms across training epochs for different layers, where \texttt{fc1} and \texttt{fc2} are \emph{scale-invariant} layers and \texttt{fc-out} is a \emph{non-scale-invariant} layer. We fix the learning rate at $0.03$ and consider different weight-decay settings: $0.01$, $0.03$, $0.05$, $0.1$. As shown in Fig.~\ref{fig:weightnorm_lr003}, weight decay induces a pronounced contraction of the weight norms in the scale-invariant layers: the norms of \texttt{fc1} and \texttt{fc2} decrease substantially over training, and the magnitude of this decrease is markedly larger than that observed for the \texttt{fc-out} layer. This contrast indicates that, under identical optimization settings, weight decay exerts a considerably stronger norm-shrinking effect on scale-invariant layers than on the non-scale-invariant layer.


\begin{figure}[h]
  \begin{center}
    \centerline{\includegraphics[width=0.5\columnwidth]{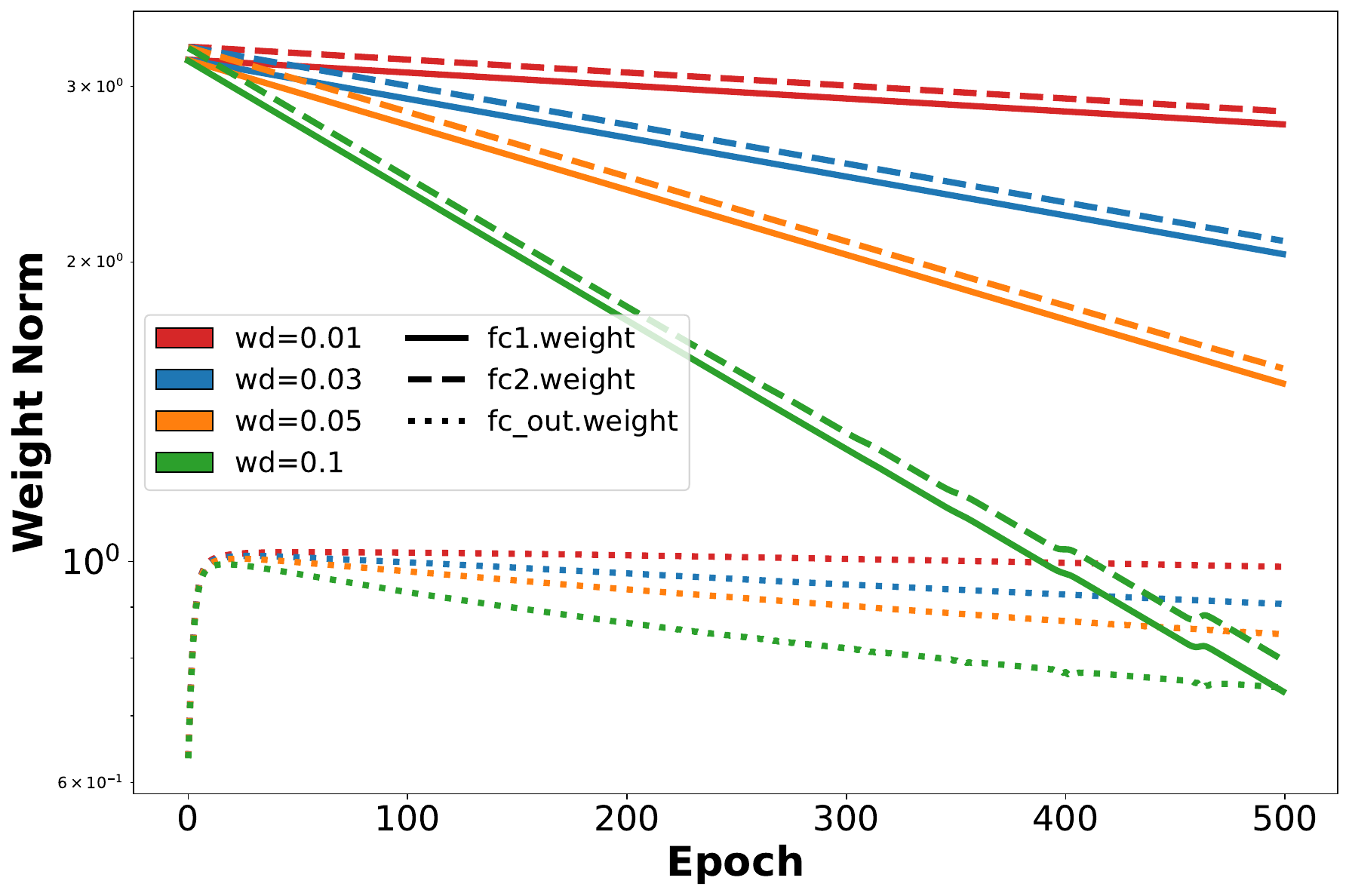}}
    \caption{\textbf{Weight decay selectively contracts scale-invariant layers.}
    Layer-wise weight-norm trajectories during training at learning rate $\eta=0.03$ for different weight decay coefficients.
    The norms of the scale-invariant layers (\texttt{fc1}, \texttt{fc2}) decay significantly faster than that of the non-scale-invariant layer (\texttt{fc-out}).
    }
    \label{fig:weightnorm_lr003}
  \end{center}
\end{figure}

\subsection{Ablation Study: Weight Decay Applied Only to Non-Scale-Invariant Layers}

Additionally, we conducted an ablation study using a three-layer FNN incorporating normalization layers. 
In this experiment, we applied weight decay exclusively to the non-scale-invariant layers. 
The resulting loss curves are presented in Figure~\ref{fig:4.5}.

Following the same protocol, we performed multiple training runs on this model using varying weight decay settings. 
We applied PCA to visualize the parameter trajectories in a low-dimensional space alongside the loss landscape. 
As illustrated in the figure, the maximum eigenvalue ($\lambda_{\max}$) measured at the end of training does not exhibit a clear increasing trend as the weight decay strength increases.
This suggests that, when weight decay is restricted to non-scale-invariant layers, stronger regularization does not lead to a systematic increase in sharpness.

When weight decay is not applied to scale-invariant layers, model stability improves significantly, and loss spikes are effectively suppressed. 
To illustrate this, we compared the loss curves and $\lambda_{\max}$ trajectories under settings with a large learning rate and large weight decay.



\begin{figure}[h]
  \vskip 0.2in
  \centering
  \begin{subfigure}[h]{0.49\columnwidth}
    \centering
    \includegraphics[width=0.8\columnwidth]{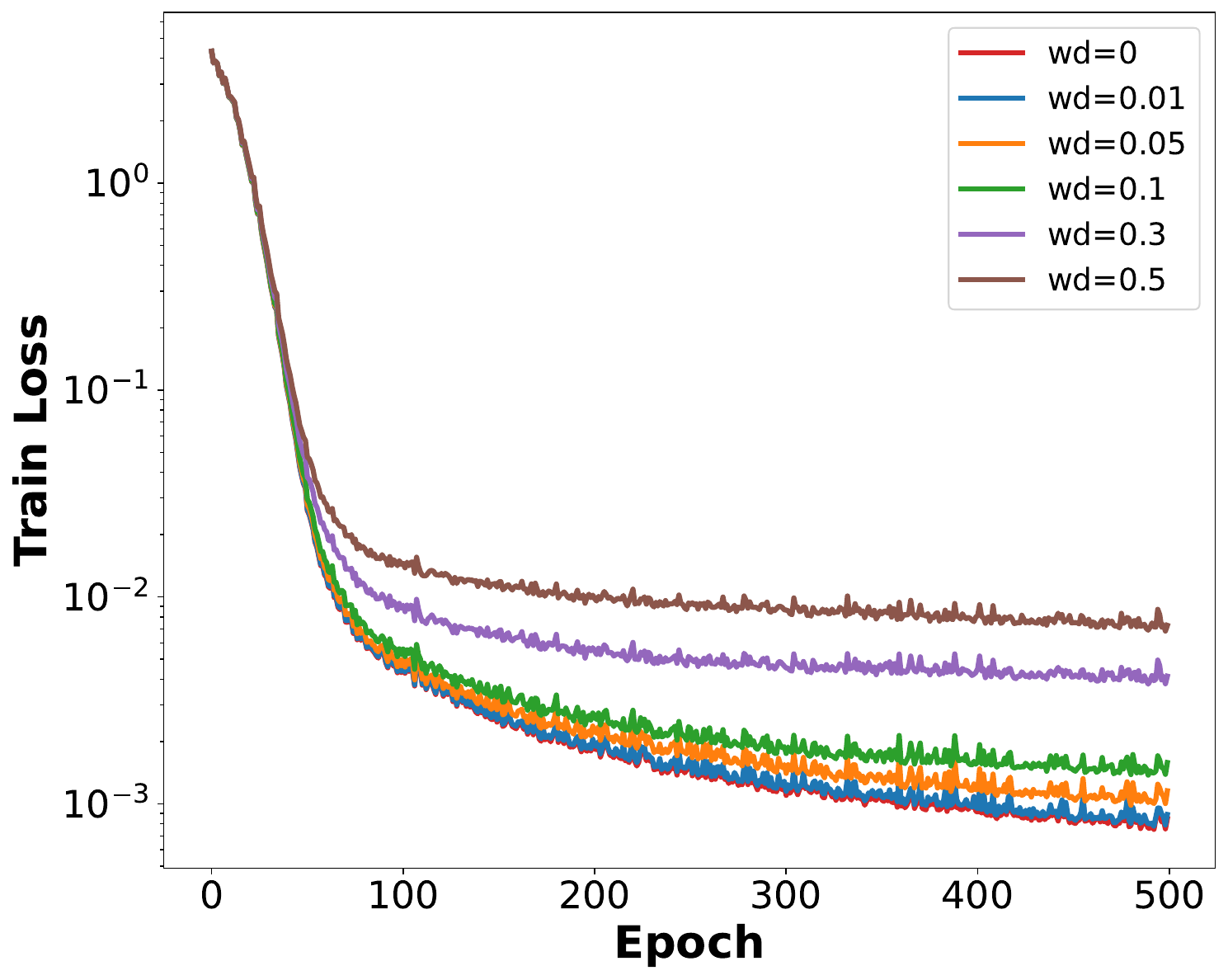}
  \end{subfigure}
  \hfill
  \begin{subfigure}[h]{0.49\columnwidth}
    \centering
    \includegraphics[width=0.8\columnwidth]{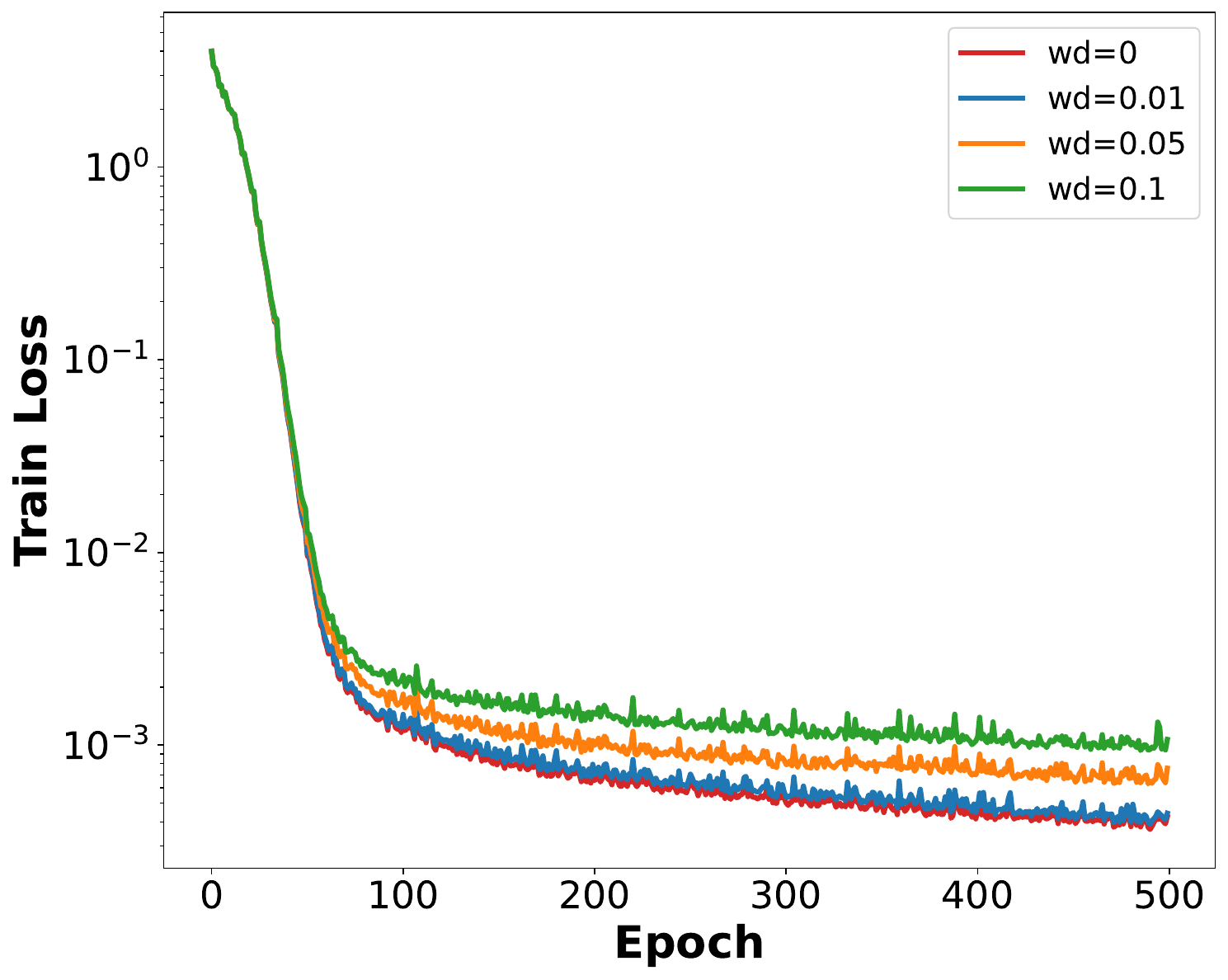}
  \end{subfigure}
  \caption{\textbf{Training loss with weight decay applied only to non-scale-invariant layers.}
    Training loss trajectories under different weight decay coefficients when regularization is applied exclusively to the non-scale-invariant layer (\texttt{fc-out}).
    Results are shown for two learning rates, $\eta=0.01$ (left) and $\eta=0.03$ (right).
    In contrast to the standard setting where weight decay is applied to all layers, the loss curves exhibit improved stability and demonstrate no pronounced loss spikes, even under relatively large learning rates and weight decay.
    }

  \label{fig:4.5}
\end{figure}



\begin{figure}[h]
  \vskip 0.2in
  \centering
  \begin{subfigure}[h]{0.49\columnwidth}
    \centering
    \includegraphics[width=0.8\columnwidth]{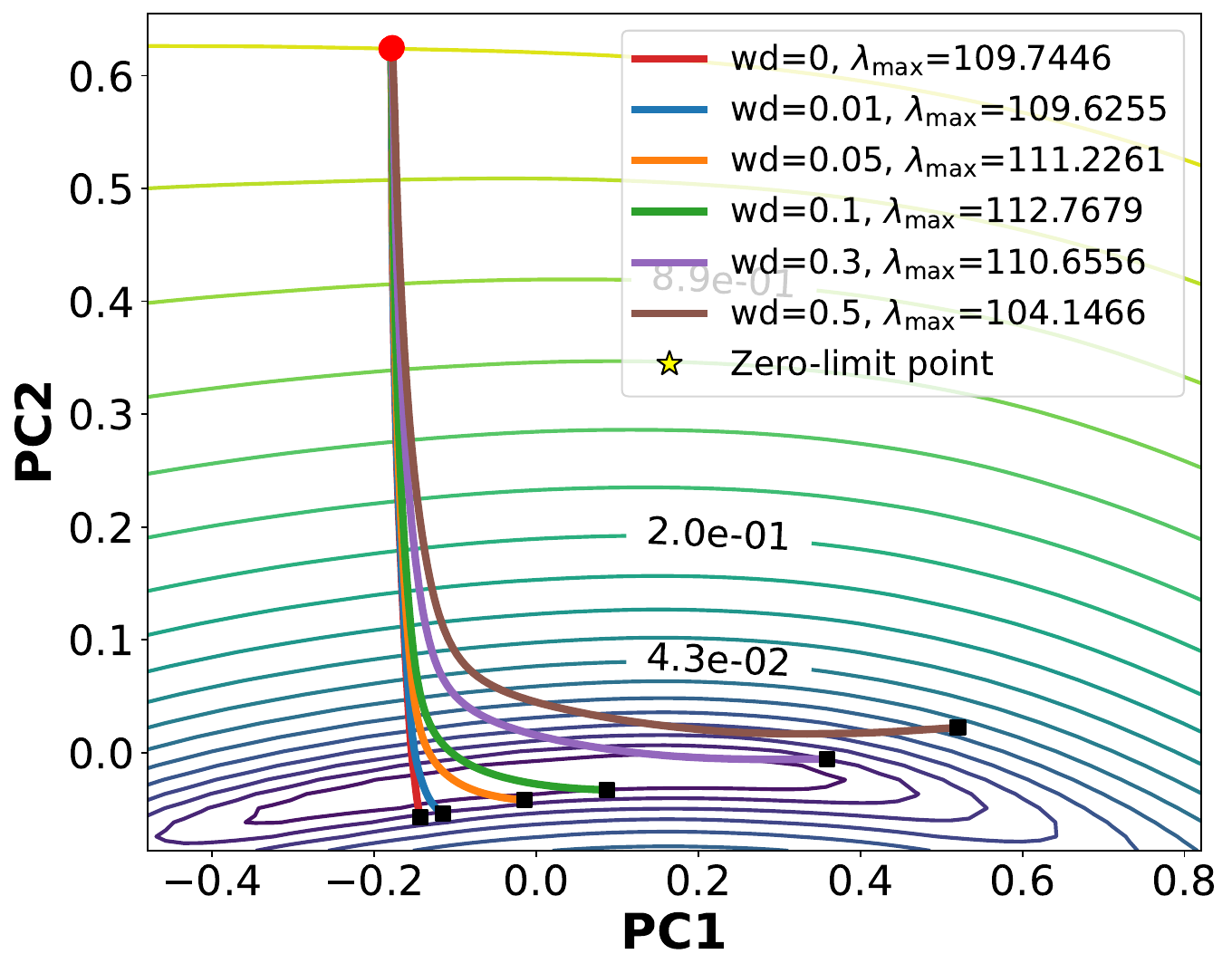}
    \caption{lr=0.01}
    \label{fig:4.5.1}
  \end{subfigure}
  \hfill
  \begin{subfigure}[h]{0.49\columnwidth}
    \centering
    \includegraphics[width=0.8\columnwidth]{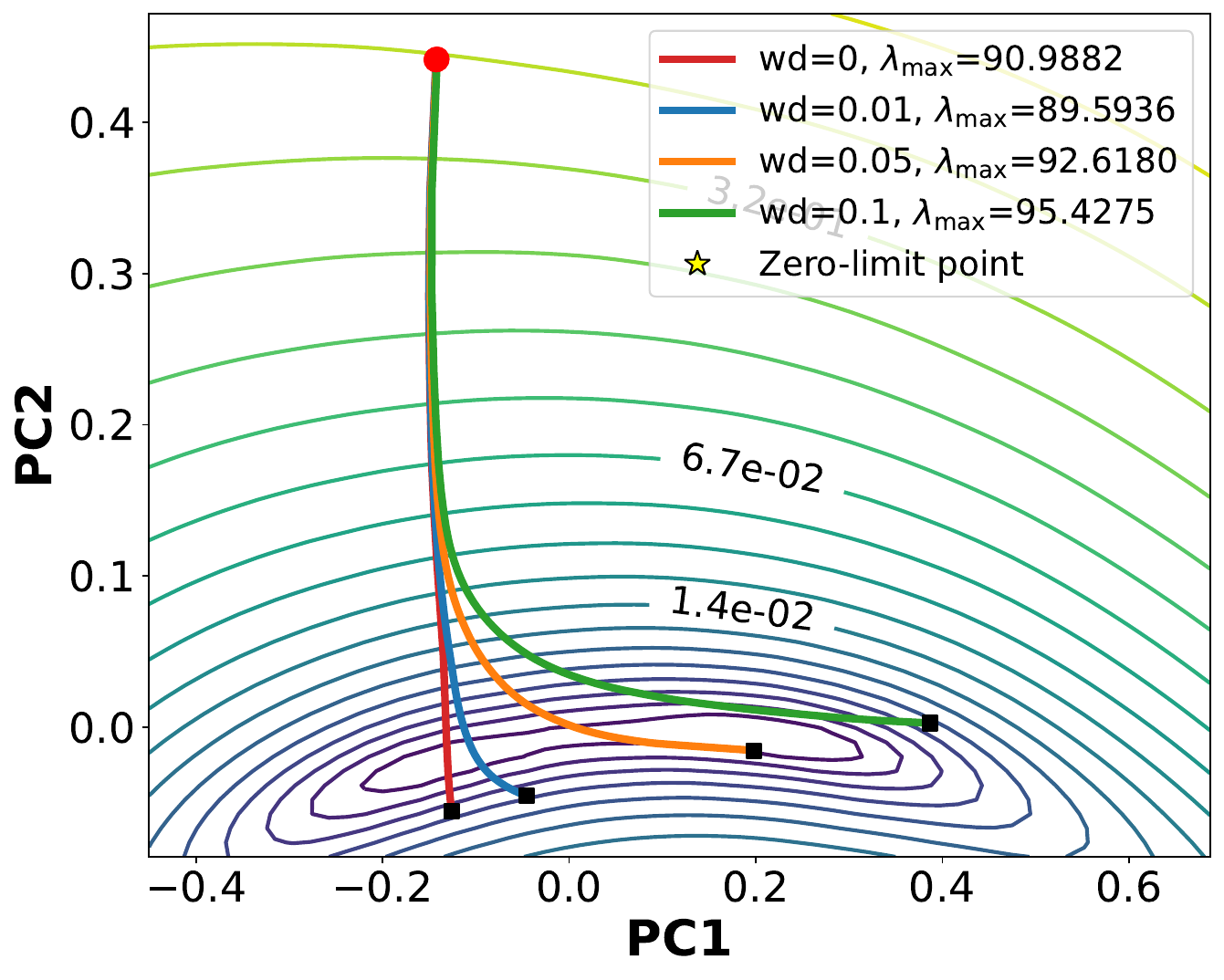}
    \caption{lr=0.03}
    \label{fig:4.5.2}
  \end{subfigure}

    \caption{\textbf{PCA visualization of loss landscapes and training trajectories with non-scale-invariant-only weight decay.}
      Two-dimensional PCA projections of parameter trajectories and the corresponding loss landscapes for the same settings as in Fig.~\ref{fig:4.5}.
      Only the non-scale-invariant layer is regularized by weight decay.
      The explained variance ratios of the first two principal components are $81.40\%$ and $13.36\%$ for $\eta=0.01$, and $85.51\%$ and $8.89\%$ for $\eta=0.03$.
      Across different weight decay values, the maximum Hessian eigenvalue $\lambda_{\max}$ measured at the end of training does not exhibit a clear increasing trend.
      }

\end{figure}

\subsection{EoS stability and loss spikes condition}
\label{app:spike_derivation}

\subsubsection{Linearization and the EoS stability criterion}
Consider the GD update $w^{+} = w - \eta \nabla \mathcal{L}(u,v)$ and linearize it around the current iterate.
For a small perturbation $\delta w$, we obtain
\begin{equation}
\delta w^{+} \approx (I - \eta H(u,v))\,\delta w,
\end{equation}
where $H(u,v)$ is the Hessian of $\mathcal{L}(\cdot,v)$ at $w$.
Linear stability requires the spectral radius of $(I-\eta H(u,v))$ to be at most $1$.
When $H(u,v)$ is (locally) positive semidefinite, this reduces to requiring
$|1-\eta\lambda_i|\le 1$ for every eigenvalue $\lambda_i$ of $H(u,v)$, i.e.,
$0\le \eta\lambda_i \le 2$, hence
\begin{equation}
\eta\,\lambda_{\max}(H(u,v)) \le 2,
\end{equation}
which is exactly \eqref{eq:gd_linear_stability} in the main text.

\subsubsection{Second-order loss change and the spike condition}
Linear instability is not equivalent to an immediate increase of the objective.
To characterize a one-step loss increase, we examine the loss change under the actual GD direction.
Let $g := \nabla \mathcal{L}(u,v)$ and $w^{+}=w-\eta g$.
A second-order Taylor expansion yields
\begin{equation}
\label{eq:taylor_spike}
\mathcal{L}(w^{+},v)
=
\mathcal{L}(u,v)
-\eta\|g\|^{2}
+\frac{\eta^{2}}{2}\, g^{\top}H(u,v)\,g
+O(\eta^{3}).
\end{equation}
Define the gradient-direction curvature (Rayleigh quotient)
\begin{equation}
\label{eq:lambda_grad_w}
\lambda_{\mathrm{grad}}(H(u,v))
:=
\begin{cases}
\dfrac{g^{\top}H(u,v)\,g}{\|g\|^{2}}, & \text{if } \|g\|>0,\\[6pt]
0, & \text{if } \|g\|=0.
\end{cases}
\end{equation}
Substituting \eqref{eq:lambda_grad_w} into \eqref{eq:taylor_spike} gives, up to second order,
\begin{equation}
\Delta \mathcal{L}
:=
\mathcal{L}(w^{+},v)-\mathcal{L}(u,v)
\approx
-\eta\|g\|^{2}\Bigl(1-\frac{\eta}{2}\lambda_{\mathrm{grad}}(H(u,v))\Bigr).
\end{equation}
Therefore, the second-order approximation predicts a \emph{loss spike} ($\Delta\mathcal{L}>0$) whenever
\begin{equation}
\label{eq:spike_condition}
\eta\,\lambda_{\mathrm{grad}}(H(u,v)) > 2.
\end{equation}
Compared with the EoS condition based on $\lambda_{\max}(H)$, the spike criterion \eqref{eq:spike_condition} is directional:
it requires the gradient to align sufficiently with sharp curvature directions so that the quadratic term in \eqref{eq:taylor_spike} dominates the linear decrease.

\subsubsection{Implication under scale-invariant homogeneity}
For scale-invariant parameters, the homogeneity/scaling property derived in the main text implies that the relevant curvature scales as $1/\|u\|^{2}$.
In particular, if the Hessian block scales as
$H_{uu}(u,v) = \alpha(w)\,H_{uu}(u,v)$ with $\alpha(w)\propto \|u\|^{-2}$,
then the Rayleigh quotient inherits the same scaling:
$\lambda_{\mathrm{grad}}(H_{uu}(u,v))=\alpha(w)\lambda_{\mathrm{grad}}(H_{uu}(u,v))$.
This allows translating \eqref{eq:spike_condition} into a critical norm threshold, motivating the spike boundary in Definition~\ref{def:spike_boundary}.

\subsection{Proof of Theorem~\ref{thm:curvature_explosion}}
\label{app:curvature_explosion_proof}

\begin{lemma}[Hessian Homogeneity under Scale Invariant Parameters]
\label{thm:hessian_homogeneity}
Let $L(u, v)$ be twice continuously differentiable and satisfy positive scale invariance with respect to $u$. Let $H(u,v)$ be partitioned into blocks $H_{uu}, H_{uv}, H_{vu}, H_{vv}$. For any scaling factor $\alpha > 0$, the Hessian at the scaled point $(\alpha u, v)$ satisfies the following rescaling property:
\begin{equation}
\label{eq:H_scaling_app_thm}
H(\alpha u,v)
=
\begin{bmatrix}
\alpha^{-2} H_{uu}(u,v) & \alpha^{-1} H_{uv}(u,v) \\
\alpha^{-1} H_{vu}(u,v) & H_{vv}(u,v)
\end{bmatrix}.
\end{equation}
\end{lemma}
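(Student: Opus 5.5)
The plan is to differentiate the scale-invariance identity twice, holding $\alpha>0$ fixed and treating $u$ and $v$ as the variables. Define $\phi(u,v) := L(\alpha u, v)$. By hypothesis $\phi(u,v) = L(u,v)$ for all $(u,v)$, with $u$ in the (open) domain where the identity holds. The two functions therefore agree on an open set, so all their partial derivatives up to order two agree there; $C^2$ regularity guarantees these derivatives exist and are continuous.

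First compute the derivatives of $\phi$ by the chain rule, since the map $(u,v)\mapsto(\alpha u, v)$ is linear with Jacobian $\mathrm{diag}(\alpha I_{d_u}, I_{d_v})$. This gives
\begin{equation*}
\nabla_u \phi(u,v) = \alpha\,(\nabla_u L)(\alpha u, v), \qquad \nabla_v \phi(u,v) = (\nabla_v L)(\alpha u, v).
\end{equation*}
Differentiating once more gives the second derivatives:
\begin{equation*}
\nabla^2_{uu}\phi = \alpha^2 H_{uu}(\alpha u,v), \qquad \nabla^2_{uv}\phi = \alpha H_{uv}(\alpha u,v), \qquad \nabla^2_{vv}\phi = H_{vv}(\alpha u,v).
\end{equation*}
Equivalently, $\nabla^2\phi(u,v) = D\,H(\alpha u,v)\,D$ with $D=\mathrm{diag}(\alpha I, I)$.

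Next, equate this with $\nabla^2 L(u,v) = H(u,v)$ block by block. This yields $H_{uu}(\alpha u,v) = \alpha^{-2}H_{uu}(u,v)$, $H_{uv}(\alpha u,v)=\alpha^{-1}H_{uv}(u,v)$, and $H_{vv}(\alpha u,v)=H_{vv}(u,v)$. Equivalently, $H(\alpha u,v) = D^{-1}H(u,v)D^{-1}$. The $H_{vu}$ block follows by symmetry of the Hessian, which holds by Schwarz's theorem under $C^2$. This is exactly \eqref{eq:H_scaling_app_thm}.

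There is no real obstacle. The only point needing care is justifying differentiation of the identity in $(u,v)$ rather than in $\alpha$. This is legitimate because the identity holds for every $(u,v)$ in an open set, typically $u\neq 0$, where $L$ is $C^2$. The cross-block factor must come out as $\alpha^{-1}$, not $\alpha^{-2}$: only one of the two differentiations is with respect to $u$, so only one factor of $\alpha$ appears.
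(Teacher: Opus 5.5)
Your proposal is correct and takes essentially the same route as the paper: differentiate the invariance identity $L(\alpha u,v)=L(u,v)$ twice in $(u,v)$ with the chain rule, then read off the block factors $\alpha^{-2}$, $\alpha^{-1}$, $1$. Your congruence $H(\alpha u,v)=D^{-1}H(u,v)D^{-1}$ with $D=\mathrm{diag}(\alpha I, I)$ is the same compact form the paper uses when it proves Theorem~\ref{thm:curvature_explosion}.
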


\begin{proof}
Differentiating the invariance condition $L(\alpha u, v) = L(u, v)$ with respect to $u$ yields the gradient homogeneity $\nabla_u L(\alpha u, v) = \frac{1}{\alpha} \nabla_u L(u, v)$. 
Differentiating this relation again with respect to $u$ involves the chain rule (the inner derivative of $\alpha u$ w.r.t $u$ is $\alpha$), leading to:

\begin{equation}
\begin{aligned}
\nabla^2_{uu} L(\alpha u,v)\cdot \alpha
&= \frac{1}{\alpha}\,\nabla^2_{uu} L(u,v),\\
\implies\quad
H_{uu}(\alpha u,v)
&= \frac{1}{\alpha^2}\,H_{uu}(u,v).
\end{aligned}
\end{equation}

Similarly, differentiating $\nabla_u L(\alpha u, v)$ w.r.t $v$ yields $H_{uv}(\alpha u, v) = \frac{1}{\alpha} H_{uv}(u, v)$. Since $\nabla_v L(\alpha u, v) = \nabla_v L(u, v)$ is invariant, its derivative w.r.t $v$, $H_{vv}$, remains invariant.
\end{proof}

Lemma~\ref{thm:hessian_homogeneity} implies that the Hessian block $H_{uu}$ is homogeneous of degree $-2$. We now quantify the impact of this property on optimization stability.

\begin{proof}
Fix $\alpha>0$. By Lemma~\ref{thm:hessian_homogeneity}, the Hessian scaling can be written as
\begin{equation}
\label{eq:H_scaling_compact_app}
H(\alpha u,v)=D_\alpha\,H(u,v)\,D_\alpha,
\qquad
D_\alpha:=\operatorname{diag}\!\big(\alpha^{-1}I_{d_u},\,I_{d_v}\big).
\end{equation}
Taking the $uu$ principal block of \eqref{eq:H_scaling_compact_app} yields
\begin{equation}
\label{eq:Huu_scaling_app}
H_{uu}(\alpha u,v)=\alpha^{-2}\,H_{uu}(u,v).
\end{equation}

Since $H(\alpha u,v)$ is real symmetric, $H_{uu}(\alpha u,v)$ is a real symmetric principal submatrix of $H(\alpha u,v)$.
Let $n=d_u+d_v$ and $m=d_u$. By the Cauchy interlacing theorem (Lemma~\ref{lemma:cauchy_interlacing}),
\begin{equation}
\label{eq:interlacing_app}
\lammax\!\big(H(\alpha u,v)\big)\ \ge\ \lammax\!\big(H_{uu}(\alpha u,v)\big).
\end{equation}
Combining \eqref{eq:interlacing_app} with \eqref{eq:Huu_scaling_app} and the identity
$\lammax(\alpha M)=\alpha\,\lammax(M)$ for $\alpha>0$ (symmetric $M$), we obtain
\[
\lammax\!\big(H_{uu}(\alpha u,v)\big)
=\lammax\!\big(\alpha^{-2}H_{uu}(u,v)\big)
=\alpha^{-2}\,\lammax\!\big(H_{uu}(u,v)\big),
\]
which proves \eqref{eq:curvature_lb_split}. The divergence claim follows immediately when
$\lammax(H_{uu}(u,v))>0$.
\end{proof}

\begin{lemma}[Cauchy Interlacing Theorem]
\label{lemma:cauchy_interlacing}
Let $A\in\mathbb{R}^{n\times n}$ be symmetric with eigenvalues
\[
\lambda_1(A)\le \lambda_2(A)\le \cdots \le \lambda_n(A).
\]
Let $B\in\mathbb{R}^{m\times m}$ be a \emph{principal submatrix} of $A$ with $1\le m<n$,
and let its eigenvalues be
\[
\mu_1(B)\le \mu_2(B)\le \cdots \le \mu_m(B).
\]
Then for every $i=1,\dots,m$,
\begin{equation}
\label{eq:cauchy_interlacing_general}
\lambda_i(A)\ \le\ \mu_i(B)\ \le\ \lambda_{i+n-m}(A).
\end{equation}
In particular, if $m=n-1$, then for $i=1,\dots,n-1$,
\begin{equation}
\label{eq:cauchy_interlacing_nminus1}
\lambda_i(A)\ \le\ \mu_i(B)\ \le\ \lambda_{i+1}(A).
\end{equation}
\end{lemma}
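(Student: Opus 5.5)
The statement to prove is Lemma~\ref{lemma:cauchy_interlacing}, the Cauchy interlacing theorem. I would use the Courant--Fischer min-max characterization of the eigenvalues of a real symmetric matrix. Without loss of generality, simultaneously permute rows and columns of $A$ so that $B$ is the leading $m\times m$ block. The permutation is an orthogonal similarity, so it changes neither the spectrum of $A$ nor $B$ as a principal submatrix. Let $P\in\mathbb{R}^{n\times m}$ be the matrix whose columns are the first $m$ standard basis vectors, so that $B=P^\top A P$, $P^\top P=I_m$, and $x\mapsto Px$ is an isometry from $\mathbb{R}^m$ onto the coordinate subspace $E=\operatorname{span}(e_1,\dots,e_m)$. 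For all $x$ we then have $x^\top B x = (Px)^\top A (Px)$ and $\|Px\|=\|x\|$.

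\textbf{Lower bound $\lambda_i(A)\le\mu_i(B)$.} Courant--Fischer gives
\[
\mu_i(B)=\min_{\substack{S\subseteq\mathbb{R}^m\\ \dim S=i}}\ \max_{x\in S,\,x\ne 0}\frac{x^\top Bx}{\|x\|^2}.
\]
Pick $S$ attaining this minimum. Then $PS\subseteq\mathbb{R}^n$ is an $i$-dimensional subspace, and its Rayleigh quotients for $A$ coincide with those of $S$ for $B$. Therefore
\[
\lambda_i(A)=\min_{\dim T=i}\ \max_{y\in T,\,y\ne 0} \frac{y^\top A y}{\|y\|^2}\le \max_{y\in PS,\,y\ne 0} \frac{y^\top A y}{\|y\|^2}=\mu_i(B).
\]

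\textbf{Upper bound $\mu_i(B)\le\lambda_{i+n-m}(A)$.} Use the max-min form
\[
\mu_i(B)=\max_{\dim S=m-i+1}\ \min_{x\in S,\,x\ne 0}\frac{x^\top Bx}{\|x\|^2},
\]
and likewise $\lambda_j(A)=\max_{\dim T=n-j+1}\min_{y\in T,\,y\ne 0}$. With $j=i+n-m$ we get $n-j+1=m-i+1$, so the optimal $S$ for $B$ embeds as an admissible $T=PS$ of the same dimension with the same minimum Rayleigh quotient. Hence $\mu_i(B)\le\lambda_{i+n-m}(A)$.

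The case $m=n-1$ is the specialization $i+n-m=i+1$. For the paper's use, the case $i=m$ of the upper bound gives $\mu_m(B)\le\lambda_n(A)$, i.e.\ $\lammax(B)\le\lammax(A)$. This also follows directly: the maximum of the Rayleigh quotient of $A$ over all of $\mathbb{R}^n$ is at least its maximum over $E$.

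The main obstacle is only bookkeeping. One has to index the min-max and max-min forms consistently with the ascending ordering of eigenvalues, and check that the subspace dimensions match, namely $m-i+1=n-(i+n-m)+1$. Once Courant--Fischer is taken as known, each inequality is a one-line subspace-embedding argument.
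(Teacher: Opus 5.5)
Your proof is correct, but it takes a different route from the paper. The paper gives no argument of its own and defers to the note of \citet{fisk2005very}, whose proof is algebraic rather than variational.

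That cited argument handles only the codimension-one case $m=n-1$. After an orthogonal change of basis making $B=\operatorname{diag}(\mu_1,\dots,\mu_{n-1})$, one writes $A$ as $B$ bordered by a vector $y$ and a scalar $a$, so that $\det(xI-A)=(x-a)\prod_j(x-\mu_j)-\sum_k y_k^2\prod_{j\neq k}(x-\mu_j)$. The signs of this polynomial at the $\mu_k$ alternate, so its roots interlace the $\mu_k$. A continuity argument covers repeated $\mu_k$ or vanishing $y_k$. The general bound $\lambda_i(A)\le\mu_i(B)\le\lambda_{i+n-m}(A)$ then follows by deleting one row and column at a time.

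Your Courant--Fischer argument, embedding an optimal subspace via $P$, gets the general-$m$ inequalities in a single step, with no induction and no genericity or perturbation argument. Your bookkeeping $n-(i+n-m)+1=m-i+1$ is right, and the optimal subspaces are attained, so selecting them is legitimate.

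Your route also makes clear that the only consequence the paper uses, $\lambda_{\max}(H_{uu}(\alpha u,v))\le\lambda_{\max}(H(\alpha u,v))$ in the proof of Theorem~\ref{thm:curvature_explosion}, is elementary. It is just the fact that the maximum of the Rayleigh quotient over a coordinate subspace cannot exceed its maximum over $\mathbb{R}^n$. The polynomial route, in exchange, avoids the min--max theorem entirely and gives strict interlacing in the generic case.
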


\begin{proof}
For proof details see~\citep{fisk2005very}.
\end{proof}

We now prove Theorem~\ref{thm:curvature_explosion}.

\begin{proof}
Fix any $\alpha>0$.
By Lemma~\ref{thm:hessian_homogeneity}, the Hessian at the scaled point $(\alpha u,v)$
admits the block rescaling
\begin{equation}
\label{eq:H_scaling_app_thm}
H(\alpha u,v)
=
\begin{bmatrix}
\alpha^{-2} H_{uu}(u,v) & \alpha^{-1} H_{uv}(u,v) \\
\alpha^{-1} H_{vu}(u,v) & H_{vv}(u,v)
\end{bmatrix}.
\end{equation}
In particular, the $uu$-block satisfies
\begin{equation}
\label{eq:Huu_scaling_thm}
H_{uu}(\alpha u,v)=\alpha^{-2} H_{uu}(u,v).
\end{equation}

Since $H(\alpha u,v)$ is real symmetric, $H_{uu}(\alpha u,v)$ is a real symmetric
principal submatrix of $H(\alpha u,v)$.
By the Cauchy interlacing theorem (Lemma~\ref{lemma:cauchy_interlacing}),
the largest eigenvalue of the full Hessian is lower bounded by that of any
principal submatrix, yielding
\begin{equation}
\label{eq:interlacing_thm}
\lammax\!\big(H(\alpha u,v)\big)
\ \ge\
\lammax\!\big(H_{uu}(\alpha u,v)\big).
\end{equation}

Combining \eqref{eq:Huu_scaling_thm} with the homogeneity of eigenvalues for symmetric matrices,
$\lammax(\alpha M)=\alpha\,\lammax(M)$ for $\alpha>0$, we obtain
\[
\lammax\!\big(H_{uu}(\alpha u,v)\big)
=
\lammax\!\big(\alpha^{-2} H_{uu}(u,v)\big)
=
\alpha^{-2}\,\lammax\!\big(H_{uu}(u,v)\big).
\]
Substituting this into \eqref{eq:interlacing_thm} proves
\[
\lammax\!\big(H(\alpha u,v)\big)
\ \ge\
\alpha^{-2}\,\lammax\!\big(H_{uu}(u,v)\big),
\]
which establishes the claimed lower bound.
\end{proof}

\subsection{Details of Boundary}
\label{app:Details of boundary}

In this appendix, we provide additional layer-wise results for the stability boundary analysis discussed in Section~\ref{Weight-norm Criticality}.
As explained in the main text, the remaining scale-invariant layers display qualitatively similar but less frequent behavior.

Figure~\ref{fig:boundary_other_layers} reports the late-training dynamics of the weight norms and corresponding stability boundaries for the second, third, and fourth scale-invariant layers in the MNIST experiment, under learning rate $\eta=0.005$ and weight decay $\lambda=0.01$.
For consistency with the main text, the same filtering protocol is applied to the detected unstable intervals: adjacent excursions separated by fewer than $30$ iterations are merged, and intervals shorter than $200$ iterations are discarded.
This filtering suppresses short-lived boundary crossings that do not correspond to macroscopic instabilities.

Across these layers, boundary crossings are less frequent and typically shorter-lived than those observed in the first layer.
Nevertheless, when sustained excursions into the predicted unstable regime occur—i.e., when the weight norm falls below the layer-specific stability boundary—they remain temporally aligned with observable disturbances in the optimization trajectory.
These results support the layer-wise nature of the proposed stability criterion, while highlighting that the first layer plays a dominant role in triggering instability in the settings considered.

\begin{figure}[h]
  \vskip 0.2in
  \centering
  \begin{subfigure}[h]{0.32\columnwidth}
    \centering
    \includegraphics[width=\columnwidth]{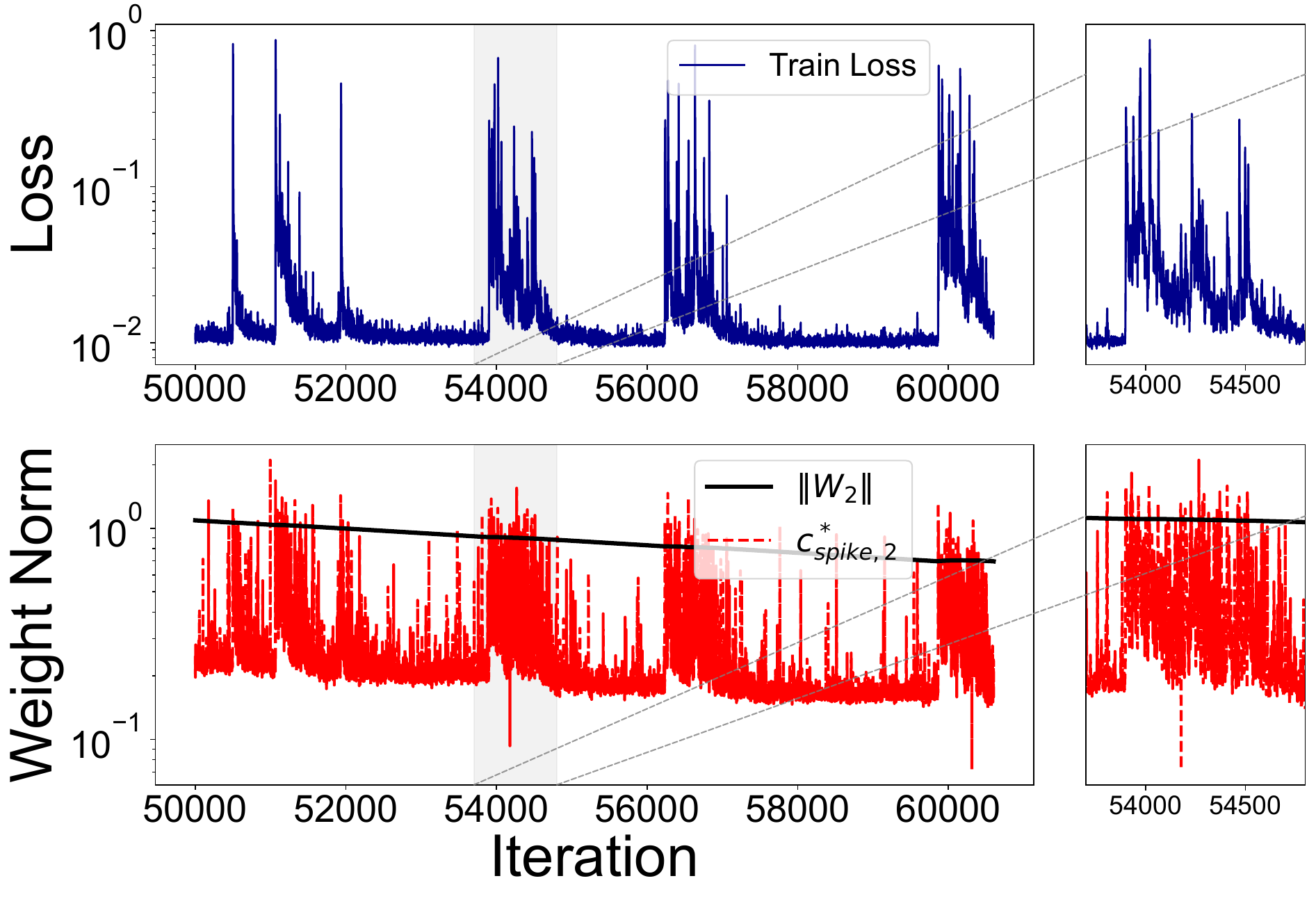}
    \caption{Second layer}
    \label{layer2_iter}
  \end{subfigure}
  \hfill
  \begin{subfigure}[h]{0.32\columnwidth}
    \centering
    \includegraphics[width=\columnwidth]{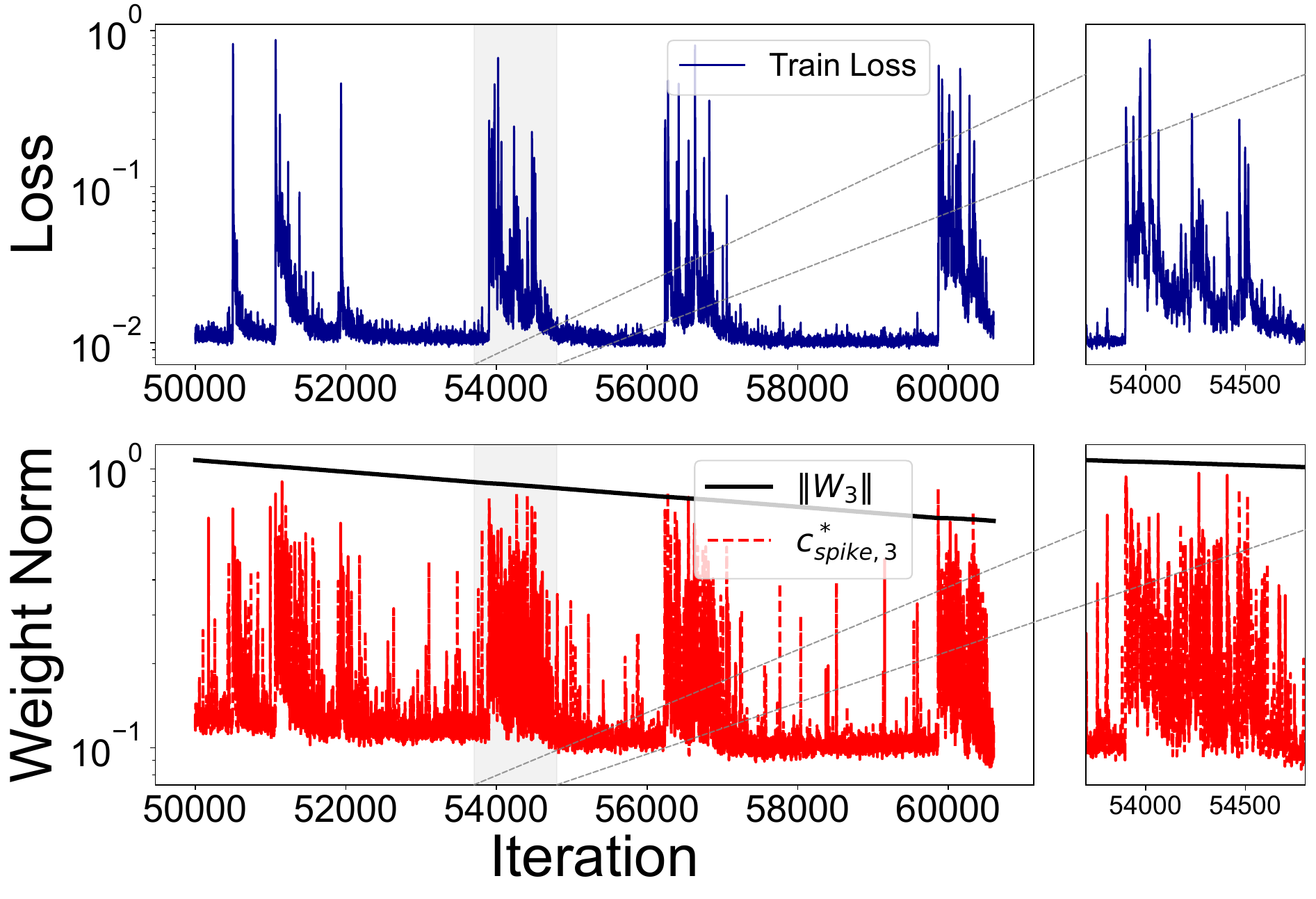}
    \caption{Third layer}
    \label{layer3_iter}
  \end{subfigure}
  \hfill
  \begin{subfigure}[h]{0.32\columnwidth}
    \centering
    \includegraphics[width=\columnwidth]{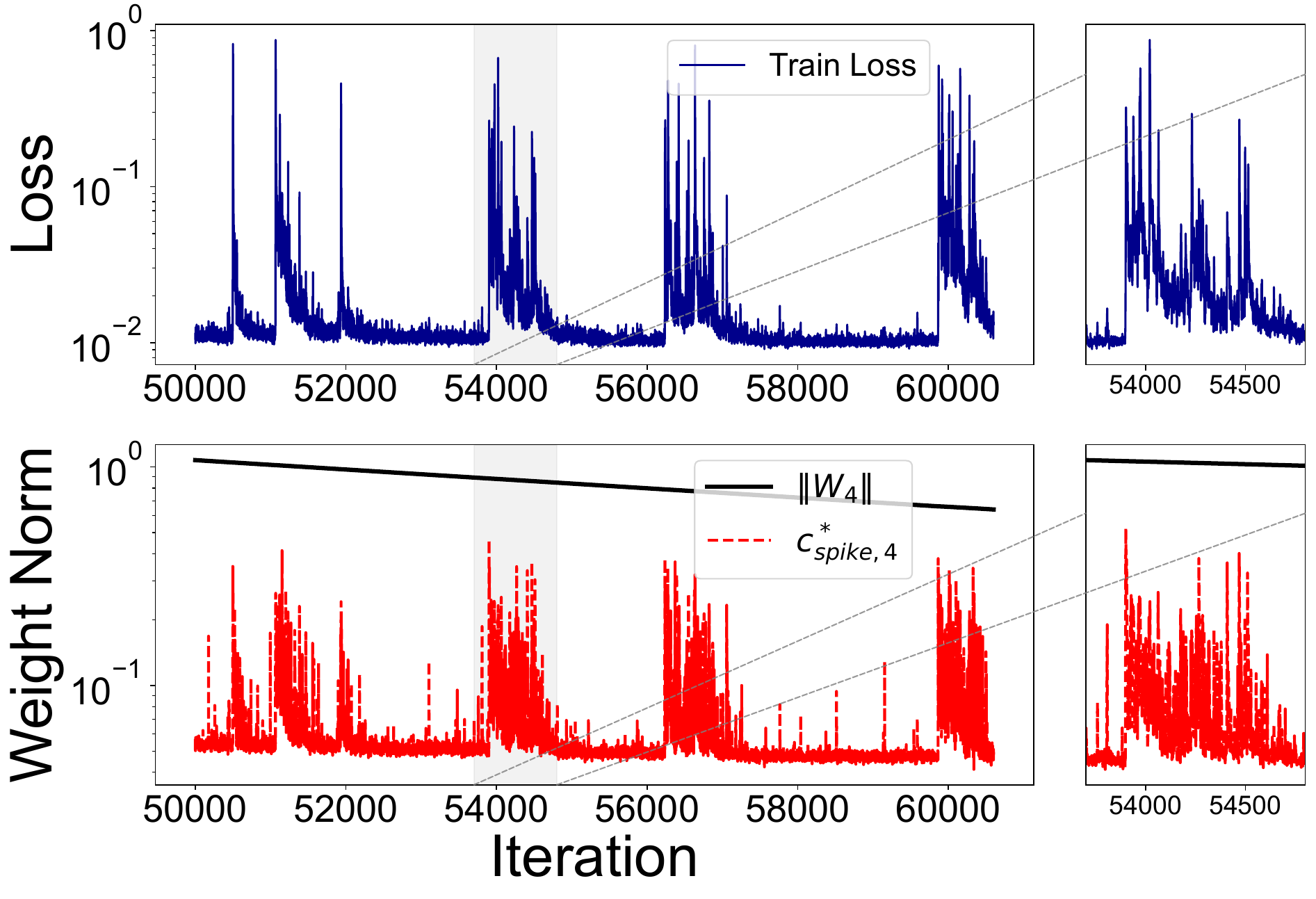}
    \caption{Fourth layer}
    \label{layer4_iter}
  \end{subfigure}
  \caption{\textbf{Layer-wise stability boundary analysis.}
  Late-training evolution of the weight norms and predicted stability boundaries for the second, third, and fourth scale-invariant layers in the MNIST experiment ($\eta=0.005$, $\lambda=0.01$).
  The same interval-filtering procedure as in the main text is applied.
  Compared to the first layer, boundary crossings in deeper layers are less frequent and less persistent, though sustained excursions remain temporally correlated with instability in the training dynamics.
  }
  \label{fig:boundary_other_layers}
\end{figure}

\subsection{Details of Top-Eigenvector Decomposition}
\label{app:transformer_modulewise_eig}

Let $\theta$ denote the vector of all learnable parameters, and let $H := \nabla^2_{\theta}\mathcal{L}$ be the Hessian of the training loss.
At selected training iterations, we compute the leading Hessian eigenvector
\begin{equation}
v := \arg\max_{\|u\|_2=1} u^\top H u,
\end{equation}
corresponding to the maximum eigenvalue of $H$.
We decompose this eigenvector into contributions from disjoint parameter modules.

For each module $m$ (e.g., embeddings; attention projections $W_Q, W_K, W_V$ and the output projection; MLP blocks; etc.), let $\mathcal{I}_m$ denote the index set of parameters belonging to that module.
We define the projected component $v_m$ by restricting $v$ to the coordinates in $\mathcal{I}_m$ (equivalently, $v_m = P_m v$, where $P_m$ denotes the corresponding coordinate projection operator).
We then report the module-wise fraction
\begin{equation}
\phi_m := \frac{\|v_m\|_2}{\|v\|_2},
\end{equation}
which quantifies the relative contribution of module $m$ to the top eigenvector, as a function of training iteration.

In Figs.~\ref{fig:transformer_module_top_eig_frac}a and~\ref{fig:transformer_module_top_eig_frac}b, different colors correspond to different parameter modules.
The black dashed curve overlays the training loss (shown on a logarithmic scale) to facilitate comparison between curvature evolution and optimization dynamics.
For readability, legend entries are ordered by the average value of $\phi_m$ over the plotted training window, from largest to smallest.

For the ablation shown in Fig.~\ref{fig:transformer_module_top_eig_frac}b, weight decay is set to zero for all parameters in the MLP blocks, while the same weight decay coefficient as in the baseline is applied to all remaining modules.
All other hyperparameters are kept identical to the baseline configuration (see Appendix~\ref{app:synthetic_transformer_details} for details).

\end{document}